\renewcommand*{\backref}[1]{}
\renewcommand*{\backrefalt}[4]{
    \ifcase #1 Not cited.
    \or        Cited on page~#2.
    \else      Cited on pages~#2.
    \fi
}
\DeclareMathOperator*{\MyS}{\vcenter{\hbox{\text{\scalebox{1.3}{$\square$}}}}}
\newcommand{\bigtimes}{\mathop{\scalebox{1.3}{\(\times\)}}}
\let\textv\v 
\newcommand{\bdot}{\mathbin{\vcenter{\hbox{\scalebox{0.5}{\ensuremath{\bullet}}}}}} 
\renewcommand{\v}{\relax\ifmmode\expandafter\boldsymbol\else\expandafter\textv\fi} 
\newcommand{\m}{\expandafter\mathbf} 
\title{Omnipresent Yet Overlooked: Heat Kernels \\ in Combinatorial Bayesian Optimization}
\author{
  Colin Doumont \\
  ETH Zürich \\
  University of Cambridge \\
  Tübingen AI Center \\
  \And
  Victor Picheny \\
  Secondmind \\
  \AND
  Viacheslav Borovitskiy\textsuperscript{$\ddagger$} \\
  ETH Zürich \\
  University of Edinburgh \\
  \And
  Henry Moss\textsuperscript{$\ddagger$} \\
  University of Cambridge \\
  Lancaster University \\
}
\begin{document}

\renewcommand{\thefootnote}{\fnsymbol{footnote}}
\footnotetext[3]{Equal supervision.}
\renewcommand{\thefootnote}{\arabic{footnote}}

\maketitle

\begin{abstract}
    Bayesian Optimization (BO) has the potential to solve various combinatorial tasks, ranging from materials science to neural architecture search. However, BO requires specialized kernels to effectively model combinatorial domains. Recent efforts have introduced several combinatorial kernels, but the relationships among them are not well understood. To bridge this gap, we develop a unifying framework based on heat kernels, which we derive in a systematic way and express as simple closed-form expressions. Using this framework, we prove that many successful combinatorial kernels are either related or equivalent to heat kernels, and validate this theoretical claim in our experiments. Moreover, our analysis confirms and extends the results presented in \texttt{Bounce}: certain algorithms' performance decreases substantially when the unknown optima of the function do not have a certain structure. In contrast, heat kernels are not sensitive to the location of the optima. Lastly, we show that a fast and simple pipeline, relying on heat kernels, is able to achieve state-of-the-art results, matching or even outperforming certain slow or complex algorithms.
\end{abstract}

\section{Introduction}
Many real-world challenges can be framed as optimization problems, where the objective functions are expensive-to-evaluate and black-box. For this type of problems, Bayesian Optimization (BO) has emerged as a preferred approach, successfully solving problems from tuning machine-learning algorithms \citep{snoek2012practical} to designing detectors for particle physics \citep{cisbani2020ai}.

Although BO has mostly been concerned with continuous domains, in recent years, multiple methods have been developed to extend BO to combinatorial domains, such as strings in genetic design \citep{moss2020boss} or graphs in neural architecture search \citep{ru2020interpretable}. These new methods often consist of a pipeline involving multiple components, making it hard to know how each individual component affects the overall empirical performance. For example, the pipelines from \texttt{CASMOPOLITAN} \citep{wan2021think} and \texttt{COMBO} \citep{oh2019combinatorial} are often chosen as baselines for other methods, but their components are never evaluated individually.

In a first attempt to bridge this gap, \citet{dreczkowski2024framework} introduced a modular benchmark, dividing combinatorial BO pipelines into three key components: (i) combinatorial kernel, (ii) acquisition-function optimizer, and (iii) presence of trust region. Using this distinction, the authors are able to assess the performance of the individual components, rather than the full pipeline. However, it remains unclear how different combinatorial kernels (i.e.\ the first component) relate to one another and what properties are responsible for their success.

In this paper, we aim to provide a clearer picture of combinatorial kernels and unify them into a common framework based on heat kernels. First, we present the necessary background information on combinatorial BO and the associated combinatorial kernels (Section~\ref{sec:background}). To relate these kernels, we then present our unifying framework, based on heat kernels, as well as our generalizations and extensions thereof (Sections~\ref{sec:unifying}). Finally, we analyze and validate our theoretical framework using empirical experiments (Section~\ref{sec:experiments}), and conclude by discussing our contributions (Section~\ref{sec:conclusion}).

\section{Background and related work} \label{sec:background}
In the next sections, we give a short introduction to (continuous) Bayesian optimization (Section~\ref{sec:BO}) and combinatorial Bayesian optimization (Section~\ref{sec:comb.bo}), as well as provide an overview of existing combinatorial kernels (Section~\ref{sec:combkernels}).

\subsection{Bayesian optimization} \label{sec:BO}
Bayesian optimization \citep{garnett2023bayesian} aims to find the global optimum $\mathbf{x}^* \in \mathcal{X} $ of an expensive black-box function $f: \mathcal{X} \to \mathbb{R}$. BO constructs a probabilistic model of $f$, often using a Gaussian Process (GP),  which estimates $f\left(\mathbf{x}\right)$ at previously unseen locations $\mathbf{x} \in \mathcal{X}$, and quantifies uncertainty of such estimates. The probabilistic model is used to define an acquisition function $\alpha(\mathbf{x})$, which balances exploration and exploitation. The next point to evaluate is selected by maximizing the acquisition function:
\[\mathbf{x}_{t+1} = \arg\max_{\mathbf{x} \in \mathcal{X}} \alpha(\mathbf{x}).\]
By iteratively updating the model with new observations and selecting points that maximize $\alpha$, BO aims to efficiently converge to the global optimum, while minimizing the number of costly evaluations of $f$.

The GP is specified by a mean function and a covariance function or kernel $k(\mathbf{x}, \mathbf{x}') : \mathcal{X} \times \mathcal{X} \to \mathbb{R}$ \citep{rasmussen2006gaussian}. When $\mathcal{X} \subseteq \mathbb{R}^n$, a common choice  is the RBF kernel:
\[k_{\ell_i}(\mathbf{x}, \mathbf{x}') = \exp\left( -\frac{1}{2} \sum_{i=1}^n \frac{(x_i - x_i')^2}{\ell_i} \right),\]
where $\ell_i > 0$ are the lengthscales. By assigning one lengthscale per dimension rather than a single global lengthscale, the model can automatically assess the importance of each input dimension. This feature is also known as Automatic Relevance Determination (ARD).

\subsection{Combinatorial Bayesian optimization} \label{sec:comb.bo}
In this paper, instead of considering Euclidean input spaces, we assume $\mathcal{X}$ is a product $\mathcal{X} = \bigtimes_{i=1}^n \mathcal{X}_i$ of finite sets $\mathcal{X}_i$. By defining $\mathcal{X}$ in this way, we encompass many domains of interest for real-world applications, such as categorical variables, strings, or even graphs. However, we emphasize that the above definition of $\mathcal{X}$ does not encompass ordinal variables, since we define $\mathcal{X}_i$ as sets (unordered) and not sequences (ordered). As a result, all theorems in this paper hold for categorical variables, but not necessarily for ordinal variables. Nevertheless, in \Cref{app:ordinal}, we argue that \emph{truly} ordinal variables are a a rare exception in combinatorial Bayesian optimization, and the results in this paper thus apply to nearly all previous works in the field.

Performing Bayesian optimization on this combinatorial input space $\mathcal{X}$ consists of two major challenges: \textit{surrogate modeling} and \textit{acquisition-function optimization}. Often, the two challenges are treated separately, with different combinatorial BO methods having either the same surrogate model or the same acquisition-function optimizer \citep{dreczkowski2024framework}. In this work, we only focus on surrogate modeling and use pre-existing methods for acquisition-function optimization.

There are three primary ways to define surrogate models on combinatorial spaces:

\begin{enumerate}
    \item \textbf{Use inherently discrete models}, such as random forests in \texttt{SMAC} \citep{hutter2011sequential}, the Tree-structured Parzen Estimator in \texttt{TPE} \citep{bergstra2011algorithms} or a sparse Bayesian linear model in \texttt{BOCS} \citep{baptista2018bayesian}.
    \item \textbf{Map input points to a Euclidean space}, after which a continuous model (often a GP) can be used. This is usually done for tasks where there already exists a large library of relevant structures, enabling deep generative models to create a continuous latent space \citep{griffiths2020constrained, moss2025return}. A common example of such a library is the ZINC database \citep{irwin2005zinc}, which is a curated collection of commercially available chemical compounds.
    \item \textbf{Use combinatorial kernels}, which we define as kernels $k(\mathbf{x}, \mathbf{x}') : \mathcal{X} \times \mathcal{X} \to \mathbb{R}$ that directly take elements in a combinatorial space as input. Through the development of custom-made combinatorial kernels for specific tasks, GPs have successfully achieved state-of-the-art performance on Bayesian optimization problems across different combinatorial domains \citep{ ru2020interpretable,moss2020boss, tripp2024basic}.
\end{enumerate}

For tasks where little or no prior information is available, GPs with combinatorial kernels have consistently outperformed other models \citep{papenmeier2023bounce,dreczkowski2024framework}, and are therefore the focus of this paper. We present a short overview of combinatorial kernels in the following section.

\subsection{Kernels on combinatorial spaces} \label{sec:combkernels}
The Hamming distance is a recurring element in the literature on combinatorial kernels, because it represents a natural way of defining the similarity between two points $\mathbf{x}$ and $\mathbf{x'}$ in the combinatorial space $\mathcal{X}$. Essentially, the Hamming distance measures the minimum number of substitutions required to change one vector $\mathbf{x}$ into the other vector $\mathbf{x'}$. More formally, we define the Hamming distance as $h\left(\mathbf{x}, \mathbf{x'}\right) = n - \sum_{i=1}^n \delta \left(x_i, x_i'\right),$ where $\mathbf{x}, \mathbf{x'} \in \mathcal{X}$ are vectors of size $n$ and $\delta\left(\cdot,\cdot\right)$ is the Kronecker delta function. Therefore, the Hamming distance is in fact nothing more than one-hot encoding the data followed by the squared Euclidean distance (divided by two). That is, 
\begin{equation} \label{eq:onehot}
    h\left(\mathbf{x}, \mathbf{x'}\right) = \Vert\mathbf{z}-\mathbf{z'}\Vert_2^2 \, / \, 2,
\end{equation}
where $\mathbf{z}, \mathbf{z'} \in \{0,1\}^s$ are the one-hot encoded versions of $\mathbf{x}, \mathbf{x'} \in \mathcal{X}$, with $s = \sum_{i=1}^n |\mathcal{X}_i|$. Equation~\ref{eq:onehot} is a first step in providing a more unified view on combinatorial kernels, and is implicitly used in the following paragraph to present existing combinatorial kernels in terms of only the Hamming distance, rather than a mixture of Hamming and Euclidean distances (after one-hot encoding).

\paragraph{Hamming-based kernels} A large proportion of existing combinatorial kernels rely on the Hamming distance in one form or another. For example, \texttt{CASMOPOLITAN} and \texttt{Bounce} use the well-known RBF and Matérn kernels, respectively, but replace the squared Euclidean distance by the Hamming distance \citep{wan2021think, papenmeier2023bounce}. Rather than using the Hamming distance directly, \texttt{COMBO} proposes to model $\mathcal{X}$ as a Hamming graph $\mathcal{G} = \left(\mathcal{X}, \mathcal{E}\right)$ \citep{imrich2000}, where the nodes $\mathbf{x}, \mathbf{x'} \in \mathcal{X}$ are connected by an edge only when they have a Hamming distance of one \citep{oh2019combinatorial}. Once $\mathcal{X}$ has been transformed into $\mathcal{G}$, a graph kernel \citep{kondor2002diffusion} is applied on $\mathcal{G}$. Although this graph-based approach appears different to the more direct approach taken in \texttt{CASMOPOLITAN} and \texttt{Bounce}, we show in Section~\ref{sec:unifying} that both approaches are closely related. Similarly to \texttt{COMBO}, \citet{deshwal2023bayesian} also propose to first transform the input space $\mathcal{X}$ before applying a kernel on this transformed space. More specifically, the authors introduce the idea of a Hamming Embedding via Dictionaries (\texttt{HED}), which embeds categorical inputs into an ordinal feature space, on which standard continuous kernels can be applied. They name the method \texttt{BODi}. 

\paragraph{Kernels for specific domains} Some state-of-the-art kernels have been developed with specific spaces of $\mathcal{X}$ in mind. For example, for string spaces, \citet{moss2020boss} introduce \texttt{BOSS}, which uses an efficient dynamic programming algorithm to compute the Sub-sequence String Kernel (\texttt{SSK}) from \citet{lodhi2002text}. Similarly, for graphs, \citet{ru2020interpretable} develop \texttt{NASBOWL}, which uses the Weisfeiler--Lehman (\texttt{WL}) kernel from \citet{shervashidze2011weisfeiler} to perform neural architecture search.

\section{Unifying framework} \label{sec:unifying}
In this section, we show that most of the general-purpose kernels from the combinatorial BO literature are in fact closely related or even identical. Specifically, we provide a unifying framework based on heat kernels (Section~\ref{sec:heat4bo}), showing that most combinatorial kernels are either heat kernels (Sections~\ref{sec:cascom}~and~\ref{sec:insights}) or part of a certain generalized class containing heat kernels (Section~\ref{sec:generalising}). Moreover, we extend our heat-kernel framework to group invariances and additive structures (Section~\ref{sec:extending}).

\subsection{Heat kernels for combinatorial Bayesian optimization} \label{sec:heat4bo}
Heat kernels, also called diffusion kernels, are a natural family of kernels \textit{on} graphs (but not \textit{between} graphs). More specifically, these graph kernels are based on the heat equation and can be regarded as the discrete counterparts of the familiar RBF kernel over Euclidean spaces. We refer readers to \citet{kondor2002diffusion} for a more detailed explanation.

Before defining heat kernels more formally, we first clarify the necessary terminology. For the remainder of this paper, when we consider an arbitrary graph $\mathcal{G} = \left(\mathcal{V}, \mathcal{E}\right)$, we assume this graph to be unweighted, undirected and without loops. Additionally, we define the graph Laplacian of $\mathcal{G}$ as $\boldsymbol{\Delta}=\mathbf{D}-\mathbf{A}$, where $\mathbf{A}$ is the adjacency matrix and $\mathbf{D}$ is the diagonal degree matrix, namely $\mathbf{D}_{i i}=\sum_{j=1}^{|\mathcal{V}|} \mathbf{A}_{i j}$. Using the graph Laplacian, we can define heat kernels as follows.

\begin{definition} \label{def:heat}
    Let $\left(\lambda_j, f_j\right)$ be the eigenpairs of the Laplacian of a graph $\mathcal{G} = \left(\mathcal{V}, \mathcal{E}\right)$, then the \textit{heat kernel} is given by
    \begin{equation} \label{eq:heat}
        k_\beta\left(x, x'\right) = \sum_{j=1}^{|\mathcal{V}|} e^{-\beta\lambda_j} f_j\left(x\right) f_j\left(x' \right),
    \end{equation}
    where $x, x' \in \mathcal{V}$ and $\beta > 0$ is a hyperparameter.
\end{definition}

\paragraph{Transforming $\mathcal{X}$ into $\mathcal{G}$} Since we defined the input space $\mathcal{X}$ as being the product of finite sets $\mathcal{X}_i$ (see Section~\ref{sec:comb.bo}), we can naturally view $\mathcal{X}$ as being a product graph $\mathcal{G}$, where each subgraph $\mathcal{G}_i = \left(\mathcal{X}_i, \mathcal{E}_i\right)$ contains the elements of $\mathcal{X}_i$ as nodes. To specify the set of edges $\mathcal{E}_i$ between these nodes, different options may seem natural depending on the setting at hand. Possible examples include cycle graph or simple path structures, but, for our setting, perhaps the most straightforward are complete graphs. By connecting all elements in $\mathcal{X}_i$, we explicitly incorporate the fact that all elements are \say{equally distant}, which seems reasonable when we have no prior information about $\mathcal{X}_i$. We call this transformed input space the Hamming graph $\mathcal{G}$ and define it as follows.

\begin{definition} \label{def:comb_graph}
    Assume $\mathcal{X}$ is an $n$-dimensional space that decomposes as $\bigtimes_{i=1}^n \mathcal{X}_i$, where each $\mathcal{X}_i$ is a finite set. Then, we can transform $\mathcal{X}$ into a corresponding \textit{Hamming graph $\mathcal{G} = \left(\mathcal{X}, \mathcal{E}\right)$} by 
    \[\mathcal{G}=\MyS_{i=1}^{n}\mathcal{G}_i,\]
    where $\MyS$ is the graph Cartesian product and $\mathcal{G}_i = \left(\mathcal{X}_i, \mathcal{E}_i \right)$ are complete unweighted graphs \citep{imrich2000}.
\end{definition}

\citet{kondor2002diffusion} show that, when applied to the Hamming graph of Definition~\ref{def:comb_graph}, the heat kernel from Equation~\ref{eq:heat} simplifies to
\begin{equation} \label{eq:kondor_simple}
    k_\beta \left(x, x' \right) \propto \prod_{i=1}^n \left[\frac{1-e^{-\beta g_i}}{1+(g_i-1)e^{-\beta g_i}}\right]^{1-\delta\left(x_i, x_i^{\prime}\right)},
\end{equation}
where $g_i = |\mathcal{X}_i|$ and $\beta > 0$ is the same hyperparameter as in Equation~\ref{eq:heat}. Additionally, when $\beta$ is replaced with $\beta_i$, we obtain the ARD version. Among others, the above simplification is due to the graphs $\mathcal{G}_i$ being complete, which means the eigenpairs can be computed analytically, and there is no need for expensive numerical eigendecomposition.

In Appendix~\ref{app:firstprinciples}, we argue that heat kernels provide a natural measure of covariances on combinatorial spaces, arising from two natural explicit assumptions. Furthermore, in Appendix~\ref{app:additive}, we demonstrate that relaxing one of these assumptions allows us to easily extend heat kernels to additive structures, connecting them to well-known combinatorial kernels.

\subsection{\texttt{CASMOPOLITAN} and \texttt{COMBO} are heat kernels} \label{sec:cascom}
\texttt{CASMOPOLITAN} \citep{wan2021think} and \texttt{COMBO} \citep{oh2019combinatorial} are two established methods for combinatorial BO, and are included in all recent benchmarks \citep{deshwal2023bayesian, papenmeier2023bounce, dreczkowski2024framework}. For surrogate modeling, both methods use a GP and introduce the following combinatorial kernels.

\begin{definition}
    Let $\delta(\cdot, \cdot)$ be the Kronecker delta function, then the \textit{\texttt{CASMOPOLITAN} kernel} is given by
    \begin{equation} \label{eq:cas_kernel}
        k_{\ell_i}\left(\mathbf{x}, \mathbf{x'}\right) = \exp \left(\frac{1}{n} \sum_{i=1}^{n} \ell_i \thinspace \delta\left(x_i, x_i^{\prime}\right)\right),
    \end{equation}
    where $\mathbf{x}, \mathbf{x'} \in \mathcal{X}$ and $\ell_i > 0$ are hyperparameters (the lengthscales).
\end{definition}
As hinted at in Section~\ref{sec:combkernels}, \texttt{CASMOPOLITAN} can be seen as the analog of the RBF kernel, where the squared Euclidean distance is replaced by the ARD version of the Hamming distance. In fact, by virtue of Equation~\ref{eq:onehot}, the non-ARD version of \texttt{CASMOPOLITAN} is equivalent to one-hot encoding $\mathbf{x}, \mathbf{x'} \in \mathcal{X}$ and using the RBF kernel.

\begin{definition}
    Let $\mathcal{G} = \left(\mathcal{X}, \mathcal{E}\right)$ be a Hamming graph as in Definition~\ref{def:comb_graph} and $\left(\lambda_j^i, f_j^i\right)$ be the eigenpairs of the Laplacian of the graph $\mathcal{G}_i = \left(\mathcal{X}_i, \mathcal{E}_i\right)$, then the \textit{\texttt{COMBO} kernel} is given by
    \begin{equation} \label{eq:combo_kernel}
        k_{\beta_i}(\mathbf{x}, \mathbf{x'}) = \prod_{i=1}^{n} \sum_{j=1}^{|\mathcal{X}_i|} e^{-\beta_i \lambda_j^i} f_j^i\left(x_i\right) f_j^i\left(x_i'\right),
    \end{equation}
    where $\mathbf{x}, \mathbf{x'} \in \mathcal{X}$ and $\beta_i > 0$ are hyperparameters.
\end{definition}

Although at first sight the kernels from \texttt{CASMOPOLITAN} and \texttt{COMBO} might seem like two separate methods, they are in fact closely related and can both be seen as heat kernels on a Hamming graph. We formalize this statement in the following theorem.

\begin{theorem}[store=cascom] \label{thm:cascom}
    The kernels from \texttt{CASMOPOLITAN} and \texttt{COMBO} are equivalent to heat kernels on a Hamming graph. That is, Equations~\ref{eq:cas_kernel} and \ref{eq:combo_kernel} are proportional to (the ARD version of) Equation~\ref{eq:kondor_simple}.
\end{theorem}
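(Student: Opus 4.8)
The plan is to reduce the whole statement to a one-dimensional computation on a single complete graph, exploiting that all three kernels in play factor over the coordinates $i = 1,\dots,n$. Indeed, Equation~\ref{eq:cas_kernel} rewrites as $\prod_{i=1}^n \bigl(e^{\ell_i/n}\bigr)^{\delta(x_i,x_i')}$; Equation~\ref{eq:combo_kernel} is by definition a product over $i$ of heat kernels (Equation~\ref{eq:heat}) on the factor graphs $\mathcal{G}_i$; and Equation~\ref{eq:kondor_simple} is also such a product, since the Laplacian of a graph Cartesian product is the Kronecker sum of the factor Laplacians, so its eigenvalues add and its eigenfunctions tensor-multiply, making the heat kernel on $\mathcal{G}=\MyS_{i=1}^n\mathcal{G}_i$ equal to the product of the heat kernels on the $\mathcal{G}_i$ (this multiplicativity is exactly what underlies the Kondor--Lafferty simplification quoted above, and its ARD variant). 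It therefore suffices to show, for each $i$, that the $i$-th factor of each of Equations~\ref{eq:cas_kernel} and \ref{eq:combo_kernel} is proportional --- with a constant depending only on the hyperparameters, not on $x_i,x_i'$ --- to the heat kernel on the complete graph $\mathcal{G}_i = K_{g_i}$, where $g_i = |\mathcal{X}_i|$. Multiplying over $i$ then collects all the per-coordinate constants into one global constant.

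Next I would compute the heat kernel on $K_g$ from its spectrum. Writing the heat kernel (Equation~\ref{eq:heat}) with an orthonormal eigenbasis identifies it with the entries of the matrix $e^{-\beta\boldsymbol{\Delta}}$. For $K_g$ the Laplacian is $g\,\mathbf{I}-\mathbf{J}$ with $\mathbf{J}$ the all-ones matrix; the eigenvalue $0$ is simple, with orthogonal projector $\mathbf{J}/g$, and the eigenvalue $g$ has multiplicity $g-1$, with orthogonal projector $\mathbf{I}-\mathbf{J}/g$. Hence $e^{-\beta\boldsymbol{\Delta}} = \mathbf{J}/g + e^{-\beta g}\bigl(\mathbf{I}-\mathbf{J}/g\bigr)$, whose entries are $\frac{1}{g}\bigl(1+(g-1)e^{-\beta g}\bigr)$ on the diagonal and $\frac{1}{g}\bigl(1-e^{-\beta g}\bigr)$ off it. Factoring out the diagonal value, the heat kernel on $K_g$ equals $\frac{1}{g}\bigl(1+(g-1)e^{-\beta g}\bigr)\cdot c_g(\beta)^{1-\delta(x,x')}$, where $c_g(\beta) = \frac{1-e^{-\beta g}}{1+(g-1)e^{-\beta g}}$ --- precisely one factor of Equation~\ref{eq:kondor_simple}. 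This settles \texttt{COMBO} at once: its $i$-th factor is this expression with $g=g_i$ and $\beta=\beta_i$, so the product over $i$ is Equation~\ref{eq:kondor_simple} up to the constant $\prod_i \frac{1}{g_i}\bigl(1+(g_i-1)e^{-\beta_i g_i}\bigr)$.

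For \texttt{CASMOPOLITAN}, the $i$-th factor $\bigl(e^{\ell_i/n}\bigr)^{\delta(x_i,x_i')}$ equals $e^{\ell_i/n}\cdot\bigl(e^{-\ell_i/n}\bigr)^{1-\delta(x_i,x_i')}$, hence is proportional to $\bigl(e^{-\ell_i/n}\bigr)^{1-\delta(x_i,x_i')}$. Comparing with the heat-kernel factor above, matching the two reduces to solving $e^{-\ell_i/n} = c_{g_i}(\beta_i)$ for $\beta_i$. I would conclude by checking that $\beta \mapsto c_g(\beta)$ is a continuous, strictly increasing bijection from $(0,\infty)$ onto $(0,1)$: strict monotonicity because $1-e^{-\beta g}$ increases and $1+(g-1)e^{-\beta g}$ decreases in $\beta$, and the endpoints because $c_g(0^+)=0$ and $c_g(\beta)\to 1$ as $\beta\to\infty$. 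Since $\ell_i>0$ is equivalent to $e^{-\ell_i/n}\in(0,1)$, every admissible \texttt{CASMOPOLITAN} lengthscale corresponds to a unique admissible heat-kernel hyperparameter $\beta_i$ and conversely, so taking products over $i$ shows Equation~\ref{eq:cas_kernel} is proportional to the ARD form of Equation~\ref{eq:kondor_simple}.

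The computation itself is short; the one place to be careful --- the ``hard part'', such as it is --- is the bookkeeping of proportionality constants, so that \texttt{CASMOPOLITAN} and \texttt{COMBO} are shown proportional to the \emph{same} functional form of Equation~\ref{eq:kondor_simple}, and the range-matching step, i.e.\ verifying that $\beta\mapsto c_g(\beta)$ is genuinely onto $(0,1)$ rather than merely that the two kernels share a shape. Everything else is the spectral computation on $K_g$ together with the multiplicativity of heat kernels under graph Cartesian products.
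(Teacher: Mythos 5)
Your proposal is correct and follows essentially the same route as the paper's proof: factor all three kernels over the coordinates, use the closed form of the heat kernel on a complete graph (the Kondor--Lafferty simplification, which you rederive from the spectrum of $g\mathbf{I}-\mathbf{J}$ rather than cite), and match \texttt{CASMOPOLITAN} via the reparameterization $e^{-\ell_i/n} = \frac{1-e^{-\beta_i g_i}}{1+(g_i-1)e^{-\beta_i g_i}}$. The one substantive addition is your check that $\beta\mapsto c_g(\beta)$ is a bijection onto $(0,1)$, which establishes the correspondence of hyperparameter ranges in both directions, whereas the paper's proof only exhibits the map from $\beta_i$ to $\gamma_i=\ell_i/n$.
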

\begin{proof}
    See Appendix~\ref{app:cascom_proof}.
\end{proof}

\subsection{New insights into combinatorial kernels} \label{sec:insights}
Theorem~\ref{thm:cascom} results in multiple new insights, both theoretical and practical.

\subsubsection{\texttt{CASMOPOLITAN} and \texttt{COMBO} rely on the same kernel}
This result follows directly from Theorem~\ref{thm:cascom}, since Equation~\ref{eq:cas_kernel}~and~\ref{eq:combo_kernel} are now also equivalent. To the best of our knowledge, this equivalence between the kernels of \texttt{CASMOPOLITAN} and \texttt{COMBO} has never been established before, with most recent papers using the two methods on the same benchmarks \citep{deshwal2023bayesian, papenmeier2023bounce}. However, because both kernels are often benchmarked with markedly different hyperparameter priors and acquisition-function optimizers, sometimes even with implementation errors \citep{papenmeier2023bounce}, this theoretical equivalence has never been visible empirically \citep{dreczkowski2024framework}. In Section~\ref{sec:experiments}, we control for these different factors, and demonstrate that this theoretical equivalence also translates to an empirical equivalence. That is, the kernels from \texttt{CASMOPOLITAN} and \texttt{COMBO}, as well as the heat kernel (on a Hamming graph) from Equation~\ref{eq:kondor_simple}, obtain the same performance in our experiments. 

\subsubsection{Heat kernels (on Hamming graphs) are RBF kernels after one-hot encoding}
Combining Theorem~\ref{thm:cascom} and Equation~\ref{eq:onehot}, we have: heat kernels (on Hamming graphs) are equivalent to \texttt{CASMOPOLITAN}'s kernel, which is itself equivalent to the RBF kernel after one-hot encoding. This connection means that heat kernels (on Hamming graphs) can easily be implemented, requiring only one additional line to one-hot encode the data before using standard implementations of the RBF kernel. For the ARD version, however, this equivalence does not hold, and Equation~\ref{eq:kondor_simple} should be used instead. \citet{borovitskiy2023isotropic} demonstrate a similar relationship for binary variables, but, to the best of our knowledge, the extension to categorical variables has never been shown before.

\subsubsection{Faster implementation of \texttt{COMBO} (for categorical variables)} \label{sec:faster_impl}
The existing implementations of \texttt{COMBO} \citep{oh2019combinatorial, dreczkowski2024framework} currently happen as follows: (i) perform numerical eigendecomposition to obtain the eigenpairs $\left(\lambda_j^i, f_j^i\right)$, and (ii) use these eigenpairs to compute Equation~\ref{eq:combo_kernel}. Instead, Theorem~\ref{thm:cascom} presents a faster solution: use Equation~\ref{eq:cas_kernel} (or the ARD version of Equation~\ref{eq:kondor_simple}), which implicitly computes the analytic eigendecomposition, rather than relying on expensive numerical eigendecomposition. As a result, the current implementation of \texttt{COMBO}'s kernel can be improved from $\mathcal{O}\left( \sum_{i=1}^{n} \left| \mathcal{X}_i \right|^3\right)$ to $\mathcal{O}\left(n\right)$. For our experiments in Section~\ref{sec:experiments}, this improved implementation resulted in a more than $2\times$ improvement in speed (see Appendix~\ref{app:wall_clock}). Although this result is only true for categorical variables, we note that most recent benchmarks in combinatorial Bayesian optimization do not contain any ordinal variables \citep{deshwal2023bayesian, papenmeier2023bounce, dreczkowski2024framework}.

\subsection{Generalizing heat kernels: Hamming- and graph-based approaches} \label{sec:generalising}
This new connection between the Hamming-based approach from \texttt{CASMOPOLITAN} and the graph-based approach from \texttt{COMBO} leads us to wonder how closely related these two approaches are. In the next sections, we formally define both approaches and show that they are deeply connected. 

\subsubsection{Hamming-based approach} 
To generalize \texttt{CASMOPOLITAN} into a broader class of kernels, we identify its main idea: leveraging a well-known continuous (isotropic) kernel (i.e.\ the RBF kernel), but substituting the standard Euclidean distance with the square root of the Hamming distance. Based on this foundational idea, we define the following class of kernels.

\begin{definition} \label{def:hammingclass}
    We define \textit{Hamming kernels} as being kernels who only depend on $\mathbf{x}$ and $\mathbf{x'}$ through the square root of the Hamming distance $h(\mathbf{x}, \mathbf{x'})$, namely
    \[k(\mathbf{x}, \mathbf{x'}) = \mathbbm{k} \left(\sqrt{h(\mathbf{x}, \mathbf{x'})}\right),\]
    for some  $\mathbbm{k} : \{0, \sqrt{1}, \sqrt{2}, \dotsc, \sqrt{n}\} \rightarrow \mathbb{R}$.
\end{definition}

Examples of Hamming kernels include \texttt{CASMOPOLITAN} and \texttt{Bounce}, and can be obtained by selecting the following functions for $\mathbbm{k}$:
\[\mathbbm{k}_\ell\left(d\right)=\exp\left(-\frac{d^2}{\ell^2}\right)\]
for \texttt{CASMOPOLITAN}, or
\[\mathbbm{k}_\ell\left(d\right) = \left( 1 + \frac{\sqrt{5}d}{\ell} + \frac{5d^2}{3 \ell^2} \right) \exp \left( -\frac{\sqrt{5}d}{\ell} \right)\]
for \texttt{Bounce}, with $d := \sqrt{h(\mathbf{x}, \mathbf{x'})}$ and $\ell > 0$ a hyperparameter. In fact, as we will show in the following proposition, by choosing $\mathbbm{k}(d)$ to be any (continuous) isotropic kernel (with
$d= \left\|\mathbf{x} - \mathbf{x}'\right\|_2$), we always obtain a valid Hamming kernel $k$.

\begin{proposition}[store=isohamming] \label{prop:isohamming}
    Let $k : \mathbb{R}^n \times \mathbb{R}^n \to \mathbb{R}$ be symmetric positive semi-definite and 
    \[k(\mathbf{z}, \mathbf{z'}) = \mathbbm{k} \left(\left\Vert\mathbf{z} -  \mathbf{z'}\right\Vert_2\right), \quad \textrm{with }\mathbf{z}, \mathbf{z'} \in \mathbb{R}^n,\]
    for some $\mathbbm{k} : \mathbb{R} \rightarrow \mathbb{R}$. Then, the kernel
    \[k^*(\mathbf{x}, \mathbf{x'}) = \mathbbm{k} \left(\sqrt{h(\mathbf{x}, \mathbf{x'}})\right), \quad \textrm{with } \mathbf{x}, \mathbf{x'} \in \mathcal{X},\]
    is always symmetric positive semi-definite.
\end{proposition}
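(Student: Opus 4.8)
The plan is to reduce the statement to the assumed Euclidean positive semi-definiteness of $k$ by passing through the one-hot encoding of Equation~\ref{eq:onehot}. It is enough to fix finitely many points $\mathbf{x}^{(1)},\dots,\mathbf{x}^{(m)}\in\mathcal{X}$ and scalars $c_1,\dots,c_m\in\mathbb{R}$ and show $\sum_{a,b=1}^{m} c_a c_b\, k^*(\mathbf{x}^{(a)},\mathbf{x}^{(b)})\ge 0$. Writing $\mathbf{z}^{(a)}\in\{0,1\}^s$ for the one-hot encoding of $\mathbf{x}^{(a)}$, where $s=\sum_{i=1}^n|\mathcal{X}_i|$, Equation~\ref{eq:onehot} gives $h(\mathbf{x}^{(a)},\mathbf{x}^{(b)})=\tfrac12\lVert\mathbf{z}^{(a)}-\mathbf{z}^{(b)}\rVert_2^2$, hence $\sqrt{h(\mathbf{x}^{(a)},\mathbf{x}^{(b)})}=\lVert\mathbf{w}^{(a)}-\mathbf{w}^{(b)}\rVert_2$ with $\mathbf{w}^{(a)}:=\mathbf{z}^{(a)}/\sqrt2$. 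Therefore $k^*(\mathbf{x}^{(a)},\mathbf{x}^{(b)})=\mathbbm{k}(\lVert\mathbf{w}^{(a)}-\mathbf{w}^{(b)}\rVert_2)$: the Gram matrix of $k^*$ at the chosen points is exactly the Gram matrix of the radial kernel $(\mathbf{u},\mathbf{u}')\mapsto\mathbbm{k}(\lVert\mathbf{u}-\mathbf{u}'\rVert_2)$ evaluated at $\mathbf{w}^{(1)},\dots,\mathbf{w}^{(m)}$.

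I would then close the argument as follows: rescaling inputs by $1/\sqrt2$ does not affect positive semi-definiteness, and restricting a positive semi-definite kernel to a finite set of points leaves a positive semi-definite matrix; so the quadratic form above is nonnegative as soon as the radial kernel $\mathbbm{k}(\lVert\cdot-\cdot\rVert_2)$ is positive semi-definite on the Euclidean space containing the $\mathbf{w}^{(a)}$. That is precisely the hypothesis on $k$, applied in that space, and the proof is done.

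The step that needs care — and the one I expect to be the real content — is the dimension bookkeeping. The one-hot codes live in $\mathbb{R}^s$ with $s=\sum_i|\mathcal{X}_i|$, which exceeds $n$ whenever some $|\mathcal{X}_i|\ge 2$; in fact $(\mathcal{X},\sqrt{h})$ cannot be isometrically embedded into $\mathbb{R}^n$ (already three values in one coordinate form an equilateral triangle, forcing dimension two), so the gap is intrinsic to any embedding-based proof. Since positive semi-definiteness of $\mathbbm{k}(\lVert\cdot\rVert_2)$ on $\mathbb{R}^n$ does not in general transfer to $\mathbb{R}^s$, the hypothesis should be read as: $\mathbbm{k}(\lVert\cdot\rVert_2)$ is positive semi-definite on $\mathbb{R}^N$ for every $N$ — equivalently, by Schoenberg's theorem, $t\mapsto\mathbbm{k}(\sqrt{t})$ is completely monotone. (Otherwise the reduction can fail: for $n=1$, $|\mathcal{X}_1|=3$ and $\mathbbm{k}(d)=\cos(\pi d)$, which is positive semi-definite on $\mathbb{R}$, the resulting $3\times 3$ matrix has eigenvalue $-1$.) Both profiles used in the paper, the Gaussian $\mathbbm{k}_\ell(d)=e^{-d^2/\ell^2}$ and the Matérn-$5/2$ profile, are valid covariance functions in every dimension, so under this reading the one-hot argument above establishes Proposition~\ref{prop:isohamming} for \texttt{CASMOPOLITAN} and \texttt{Bounce}, and for any isotropic profile of Schoenberg type.
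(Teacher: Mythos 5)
Your argument is essentially the paper's own: Appendix~\ref{app:isohamming_proof} likewise passes through the one-hot encoding of Equation~\ref{eq:onehot}, writes $\sqrt{h(\mathbf{x},\mathbf{x}')}=\lVert\mathbf{z}-\mathbf{z}'\rVert_2/\sqrt{2}$, and concludes by applying positive semi-definiteness of the Euclidean isotropic kernel at the rescaled encoded points; your version merely makes the finite quadratic form explicit. Where you go beyond the paper is the dimension bookkeeping, and you are right that this is where the real content sits. The one-hot codes live in $\mathbb{R}^s$ with $s=\sum_{i=1}^n|\mathcal{X}_i|>n$, and since $(\mathcal{X},\sqrt{h})$ does not in general embed isometrically into $\mathbb{R}^n$ (three equidistant values in a single coordinate already force dimension two), positive semi-definiteness of $\mathbbm{k}(\lVert\cdot-\cdot\rVert_2)$ on $\mathbb{R}^n$ alone does not license the final step --- the paper's proof silently applies the hypothesis in $\mathbb{R}^s$ rather than $\mathbb{R}^n$. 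Your counterexample ($n=1$, $|\mathcal{X}_1|=3$, $\mathbbm{k}(d)=\cos(\pi d)$, whose $3\times 3$ Gram matrix has the all-ones vector as an eigenvector with eigenvalue $-1$) confirms that the statement read literally is false, so the hypothesis must be strengthened to ``$\mathbbm{k}$ is an isotropic positive semi-definite profile in every dimension,'' equivalently $t\mapsto\mathbbm{k}(\sqrt{t})$ completely monotone by Schoenberg's theorem. This holds for the Gaussian, Mat\'ern and rational quadratic profiles the paper actually uses, so all downstream claims survive, but the proposition and its proof should be amended accordingly. Under that reading your proof is complete and correct.
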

\begin{proof}
    See Appendix~\ref{app:isohamming_proof}.
\end{proof}

Using Proposition~\ref{prop:isohamming}, we arrive at a general procedure for generating new combinatorial kernels: choose any (continuous) isotropic kernel and replace the Euclidean distance by the square root of the Hamming distance. For instance, an interesting new kernel can be constructed by defining $\mathbbm{k}$ as the well-known rational quadratic kernel \citep{rasmussen2006gaussian}.

\subsubsection{Graph-based approach} \label{sec:phi_kernels}
To generalize \texttt{COMBO} into a broader class of kernels, we identify its main idea: transforming the input space $\mathcal{X}$ into a Hamming graph $\mathcal{G}$, and then applying a graph kernel. While numerous graph kernels exist, one prominent family is the class of $\Phi$-kernels \citep{smola2003kernels, borovitskiy2023isotropic}, which define the kernel in terms of a regularization function $\Phi(\lambda)$ of the graph's eigenvalues.

\begin{definition} \label{dfn:phi_kernel}
    Let $\left(\lambda_j, f_j\right)$ be the eigenpairs of the Laplacian of a graph $\mathcal{G} = \left(\mathcal{V}, \mathcal{E}\right)$, then \textit{$\Phi$-kernels} are given by
    \begin{equation} \label{eq:phikernels}
        k_\beta\left(x, x'\right) = \sum_{j=1}^{|\mathcal{V}|} \Phi_\beta (\lambda_j) \, f_j\left(x\right) f_j\left(x' \right),
    \end{equation}
    where $x, x' \in \mathcal{V}$ and $\Phi_\beta : \mathbb{R} \to [0, \infty)$, with $\beta$ a (set of) hyperparameter(s).
\end{definition}

By selecting $\Phi_\beta\left(\lambda\right) = \exp\left(-\beta \lambda\right)$ in Equation~\ref{eq:phikernels} and applying the simplification from \citet{kondor2002diffusion}, we recover Equation~\ref{eq:combo_kernel} (without ARD), demonstrating that \texttt{COMBO} is indeed (the ARD version of) a $\Phi$-kernel. Additionally, this connection with $\Phi$-kernels is yet another way to come up with new kernels. By transforming $\mathcal{X}$ into $\mathcal{G}$ and selecting any function $\Phi : \mathbb{R} \to [0, \infty)$, we arrive at a valid combinatorial kernel $k$, which is again guaranteed to be symmetric and positive semi-definite. Rather than coming up with new definitions of $\Phi\left(\lambda\right)$, further work could first try well-known, state-of-the-art $\Phi$-kernels, such as the graph Matérn kernel \citep{borovitskiy2021matern} or the sum-of-inverse polynomial kernel \citep{wan2024bayesian}.

\subsubsection{Equivalence of Hamming- and graph-based approaches} \label{sec:hamphi}
Because of Theorem~\ref{thm:cascom}, we know there is at least one kernel from the class of Hamming kernels, namely \texttt{CASMOPOLITAN}, that corresponds to an equivalent kernel in the class of $\Phi$-kernels, namely \texttt{COMBO}. In fact, we show in Theorem~\ref{thm:hamphi} that, for finite sets $\mathcal{X}_i$ of equal size, any Hamming kernel must be a $\Phi$-kernel for some choice of $\Phi$, and vice versa. Note that Theorem~\ref{thm:hamphi} does not hold for finite sets of different sizes; see Appendix~\ref{app:proofs} for a counter example. Although this assumption might seem restrictive, we point out that it still holds for every single problem considered in recent benchmarks \citep{deshwal2023bayesian,papenmeier2023bounce,dreczkowski2024framework}, including this paper.

\begin{theorem}[store=hamphi] \label{thm:hamphi}
    Let $\mathcal{G} = (\mathcal{X}, \mathcal{E})$ be a Hamming graph with $\left|\mathcal{X}_1\right|=\dots=\left|\mathcal{X}_n\right|$. Then, the class of Hamming kernels on $\mathcal{X}$ coincides with the class of $\Phi$-kernels on $\mathcal{G}$.
\end{theorem}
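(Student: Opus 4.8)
The plan is to route both inclusions through the spectral projectors of the Hamming graph, using that, under the equal-size hypothesis, $\mathcal{G} = \MyS_{i=1}^n \mathcal{G}_i$ is distance-regular with graph distance equal to the Hamming distance. First I would pin down the spectrum: each $\mathcal{G}_i$ is the complete graph on $g := |\mathcal{X}_i|$ vertices, whose Laplacian has eigenvalue $0$ (multiplicity $1$) and $g$ (multiplicity $g-1$); since eigenvalues add under the graph Cartesian product, $\mathcal{G}$ has exactly the $n+1$ distinct Laplacian eigenvalues $\lambda^{(m)} = gm$, $m = 0, \dots, n$, with eigenspace $V_m$ of dimension $\binom{n}{m}(g-1)^m$ (note $\sum_m \binom{n}{m}(g-1)^m = g^n = |\mathcal{X}|$). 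Writing $P_m$ for the orthogonal projector onto $V_m$ and grouping the eigenvector sum in Definition~\ref{dfn:phi_kernel} by eigenspace, every $\Phi$-kernel is the matrix $K = \sum_{m=0}^n \Phi_\beta(gm)\,P_m$; conversely any such combination is a $\Phi$-kernel, setting $\Phi_\beta$ to the prescribed values on $\{0, g, \dots, gn\}$ and (say) zero elsewhere.

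The crux is the structural claim that \emph{each $P_m$, as a matrix on $\mathcal{X} \times \mathcal{X}$, has entries depending only on $h(\mathbf{x},\mathbf{x}')$}. I would prove $\operatorname{span}\{P_0,\dots,P_n\} = \operatorname{span}\{A_0,\dots,A_n\}$, where $A_d$ is the distance-$d$ matrix $(A_d)_{\mathbf{x},\mathbf{x}'} = \mathbbm{1}[h(\mathbf{x},\mathbf{x}') = d]$. Both spans have dimension $n+1$: the $P_m$ are nonzero mutually orthogonal idempotents, and the $A_d$ are nonzero $0/1$ matrices with disjoint supports. For the inclusion, distance-regularity gives the three-term recurrence $A_1 A_d = b_{d-1}A_{d-1} + a_d A_d + c_{d+1}A_{d+1}$, so inductively $A_d$ is a degree-$d$ polynomial in the adjacency matrix $A_1$; and since $A_1 = \sum_m \theta_m P_m$ with adjacency eigenvalues $\theta_m = n(g-1) - gm$ pairwise distinct, a Vandermonde argument shows $\operatorname{span}\{I, A_1, \dots, A_1^{\,n}\} = \operatorname{span}\{P_0,\dots,P_n\}$, which is contained in $\operatorname{span}\{A_0,\dots,A_n\}$; equality then follows by dimension. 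Hence $P_m = \sum_{d=0}^n c_{md} A_d$ for scalars $c_{md}$, so $(P_m)_{\mathbf{x},\mathbf{x}'} = c_{m,\,h(\mathbf{x},\mathbf{x}')}$.

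Both directions are then immediate. A $\Phi$-kernel $K = \sum_m \Phi_\beta(gm) P_m$ has entries $\sum_m \Phi_\beta(gm)\,c_{m,\,h(\mathbf{x},\mathbf{x}')}$, a function of $h$ — hence of $\sqrt h$, as $\sqrt{\cdot}$ is a bijection on $\{0,\dots,n\}$ — so it is a Hamming kernel. Conversely, a Hamming kernel is $K = \sum_{d=0}^n \mathbbm{k}(\sqrt d)\,A_d$; expanding each $A_d$ in the $P_m$ basis yields $K = \sum_m \big(\sum_d \mathbbm{k}(\sqrt d)\,\tilde c_{dm}\big) P_m$, a $\Phi$-kernel for $\Phi_\beta$ defined by these coefficients at $gm$. (Since Definition~\ref{dfn:phi_kernel} requires $\Phi_\beta \ge 0$, the matching reading is that both classes are understood among positive semi-definite kernels, and positive semi-definiteness of a Hamming kernel is precisely non-negativity of those coefficients, because the $P_m$ project onto mutually orthogonal subspaces.)

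I expect the structural claim of the second paragraph to be the main obstacle — not because the argument is deep, but because it must be made self-contained and, crucially, it is exactly where the hypothesis $|\mathcal{X}_1| = \dots = |\mathcal{X}_n|$ enters: only then does the graph distance coincide with the Hamming distance and the Laplacian eigenvalue $gm$ of a pair $(\mathbf{x},\mathbf{x}')$ depend on it only through $h(\mathbf{x},\mathbf{x}')$. With unequal sizes the relevant eigenvalue records \emph{which} coordinates differ, not merely how many, so $P_m \notin \operatorname{span}\{A_d\}$ and the equivalence genuinely fails — matching the counterexample promised in Appendix~\ref{app:proofs}.
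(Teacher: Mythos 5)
Your proof is correct, but it takes a genuinely different route from the paper's. The paper splits the equivalence into three lemmas: it shows every $\Phi$-kernel is invariant under $\mathrm{Aut}(\mathcal{G})$, identifies $\mathrm{Aut}(\mathcal{G}) = S_g^n \rtimes S_n$ to prove that isotropic kernels are exactly the functions of Hamming distance, and then establishes the remaining inclusion (Hamming $\Rightarrow$ $\Phi$) by an explicit hands-on computation: it verifies that the product eigenfunctions $f_{j_1}\cdots f_{j_n}$ of the Laplacian are also eigenfunctions of the covariance operator of $k$, with eigenvalues given by a telescoping alternating sum that depends only on the number of non-constant factors, and concludes via Mercer's theorem. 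You instead identify both classes with (the positive-semi-definite part of) the Bose--Mesner algebra of the Hamming scheme, proving $\operatorname{span}\{P_0,\dots,P_n\} = \operatorname{span}\{A_0,\dots,A_n\}$ via distance-regularity, the three-term recurrence, and a Vandermonde/dimension count; both inclusions then drop out of this single structural fact. Your argument is more conceptual, makes the $n+1$-dimensional coincidence of the two classes transparent, and cleanly localizes where the equal-size hypothesis is used; its cost is reliance on the (standard but not proved here) distance-regularity of $H(n,g)$, which you would need to verify to be self-contained, whereas the paper's eigenfunction computation is elementary and needs only $\sum_{x'\neq x} f_j(x') = -f_j(x)$. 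You also handle the $\Phi_\beta \geq 0$ versus positive-semi-definiteness bookkeeping correctly, which the paper leaves implicit in its appeal to Mercer. One small slip in your closing commentary: the graph distance on a Cartesian product of complete graphs equals the Hamming distance \emph{regardless} of whether the factor sizes agree; what fails for unequal sizes is distance-regularity and the grouping of eigenspaces by Hamming weight alone (distinct coordinate subsets of the same size, and even of different sizes, can share or split eigenvalues), which is exactly what the paper's $2\times 2\times 4$ counterexample exploits. This does not affect the validity of your proof of the theorem itself.
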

\begin{proof}
    See Appendix~\ref{app:hamphi_proof}.
\end{proof}

Theorem~\ref{thm:hamphi} shows that, for finite sets of equal size, Hamming- and graph-based approaches are fundamentally the same, and that using one amounts to using the other. This equivalence appears to be new, as most combinatorial BO papers present \texttt{CASMOPOLITAN} and \texttt{COMBO} as using different approaches \citep[e.g.][]{deshwal2023bayesian}, and not different perspectives of the same approach.

\subsection{Extending heat kernels: incorporating invariances and additive structures} \label{sec:extending}

\paragraph{Group invariances} In \Cref{app:inv}, we review different methods to incorporate \emph{any} group invariance within heat kernels. Moreover, we propose a simple trick that significantly improves performance on permutation-invariant functions, and matches the \texttt{SSK} with only a fraction of the wall-time and compute (\Cref{fig:perm_inv_kernels}). Lastly, using group-invariant heat kernels, we match the \texttt{WL} graph kernel on a set of neural-architecture-search problems (\Cref{fig:nas_bo}), showcasing the advantage of incorporating group invariances for real-world tasks.

\paragraph{Additive structures} In \Cref{app:additive}, we use our systematic derivation from Appendix~\ref{app:firstprinciples} to establish a clear connection between heat kernels and additive-based kernels, such as \texttt{CoCaBO} and \texttt{RDUCB}, as well as propose a new \say{explainable} kernel. Although incorporating additive structure often decreases performance for standard benchmarks (\Cref{fig:main_kernels,fig:app_kernels}), we note a clear improvement in performance when applied to biological datasets (\Cref{fig:mcbo_kernels,fig:mcbo_methods}), which we discuss in \Cref{app:biology}. 

Although the above extension allows us to connect heat kernels to an even wider group of well-known kernels---now also containing \texttt{CoCaBO} and \texttt{RDUCB}---we argue that \texttt{BODi}'s \texttt{HED} kernel is still fundamentally different from heat kernels (or variations thereof), and provide a proof in \Cref{app:bodi}. 

\section{Experiments} \label{sec:experiments}
\paragraph{Benchmarks and baselines} Our analysis is performed on a wide range of challenging combinatorial optimization problems: \texttt{Pest}~\texttt{Control}, \texttt{LABS} and \texttt{Cluster}~\texttt{Expansion} are displayed below in Figures~\ref{fig:main_kernels}~and~\ref{fig:main_methods}, and \texttt{Contamination}~\texttt{Control} and \texttt{MaxSAT} have been moved to Appendix~\ref{app:cont_maxsat} (due to space constraints). All five problems are taken from \texttt{Bounce} \citep{papenmeier2023bounce}, and are described in more detail in Appendix~\ref{app:mcbo_datasets}. Additionally, in \Cref{app:perminv_sfu,app:biology}, we experiment on five (discretized) permutation-invariant Simon Fraser University (SFU) test functions \citep{surjanovic2013}, as well as two biological and two logic-synthesis tasks from the \texttt{MCBO} benchmark.

We compare against several competitive baselines, including: \texttt{COMBO} \citep{oh2019combinatorial}, \texttt{CoCaBO} \citep{ru2020bayesian}, \texttt{BOSS} \citep{moss2020boss}, \texttt{CASMOPOLITAN} \citep{wan2021think}, \texttt{BODi} \citep{deshwal2023bayesian}, \texttt{RDUCB} \citep{ziomek2023random} and \texttt{Bounce} \citep{papenmeier2023bounce}. All baselines were evaluated using their corresponding implementation in the \texttt{MCBO} benchmark, except for \texttt{Bounce}, which is not present in \texttt{MCBO} and was evaluated using the authors' original implementation.

\paragraph{Experimental set-up}  In Figure~\ref{fig:main_kernels}, we keep a fixed pipeline and only change the combinatorial kernel, so that differences in other (confounding) factors of the pipeline do not affect results. Based on the ablation study from the \texttt{MCBO} paper, we choose the following pipeline: the genetic algorithm used in \texttt{BOSS} \citep{moss2020boss}, together with the trust region from \texttt{CASMOPOLITAN} \citep{wan2021think}. In Figure~\ref{fig:main_methods}, we combine this pipeline with a heat kernel, and compare it against other, well-known pipelines. In both figures, and as done in \texttt{Bounce} \citep{papenmeier2023bounce}, we display the originally published formulation, as well as a modification with the optimal point moved to a random location (marked by [reloc.]). All methods have been evaluated for at least 20 different random seeds, with more noisy datasets having up to 60 seeds (e.g.\ the biological tasks from \Cref{app:biology}). Further details regarding our experimental set-up are described in Appendix~\ref{app:mcbo_impl}, and our implementation can be found at: \url{https://github.com/colmont/heat-kernels-4-BO.git}.

\begin{figure}[ht]
    \centering
    \includegraphics[width=0.85\columnwidth]{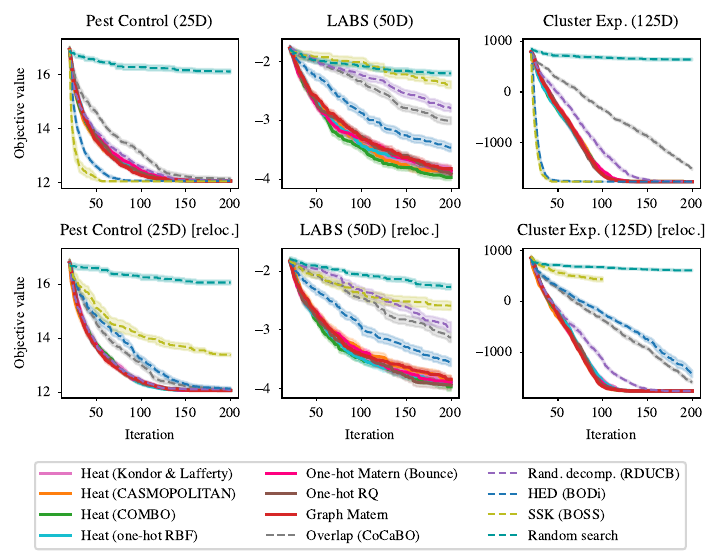}
    \caption{When keeping the pipeline fixed and varying only the kernel choice, our unifying framework becomes visible empirically: all heat (and Hamming) kernels achieve  near-identical results.}
    \label{fig:main_kernels}
\end{figure}

\subsection{Empirical evidence for the theoretical claims of the unifying framework}
In Sections~\ref{sec:heat4bo}-\ref{sec:insights}, we proved that the following four kernels are \textit{theoretically equivalent}: the RBF kernel (after one-hot encoding), and the kernels found in Kondor \& Lafferty (Equation~\ref{eq:kondor_simple}), \texttt{CASMOPOLITAN} (Equation~\ref{eq:cas_kernel}) and \texttt{COMBO} (Equation~\ref{eq:combo_kernel}). Additionally, in Section~\ref{sec:generalising}, we argued that these four kernels are closely related---through the concept of \say{Hamming kernels}---to \texttt{Bounce}'s Matérn kernel (after one-hot encoding), the graph Matérn kernel \citep{borovitskiy2021matern}, and the Rational Quadratic (\texttt{RQ}) kernel after one-hot encoding (proposed in \Cref{sec:phi_kernels}). Now, we show that these seven kernels are also \textit{empirically equivalent}: on all problems from \Cref{fig:main_kernels,fig:app_kernels,fig:mcbo_kernels}, the seven kernels (indicated by a solid line) achieve nearly indistinguishable performance.

\subsection{Sensitivity of \texttt{BODi} and \texttt{BOSS} to the location of the optima} \label{sec:sensitivity}
In Figures~\ref{fig:main_kernels}~and~\ref{fig:app_kernels}, \texttt{BODi}'s \texttt{HED} kernel and \texttt{BOSS}' \texttt{SSK} achieve significantly worse results when the optima are relocated. For \texttt{BODi} (\texttt{HED}), \citet{papenmeier2023bounce} provide an extended analysis, arguing that this behavior is due to the kernel's higher probability of sampling the last category, which is often the most \say{rewarding} category (at least in the original version of the above problems). For \texttt{BOSS} (\texttt{SSK}), we believe we are the first to report this behavior, and attribute it to a misuse of the kernel: the \texttt{SSK} was designed specifically for strings, and not arbitrary categorical inputs (e.g.\ as is done in \texttt{MCBO}). Specifically, we hypothesize that the \texttt{SSK} performs well because it favors points with repeated categorical values, which are more \say{rewarding} in the original version of the above problems (but not necessarily in the modified ones, hence the performance degradation). We provide a more detailed explanation in Appendix~\ref{app:perminv_sfu}. In contrast, heat (and Hamming) kernels, as well as the related \texttt{CoCaBO} and \texttt{RDUCB} kernels (see Appendix~\ref{app:additive}), are unaffected when the optimum is relocated.

\begin{figure}[ht]
    \centering
    \includegraphics[width=0.85\columnwidth]{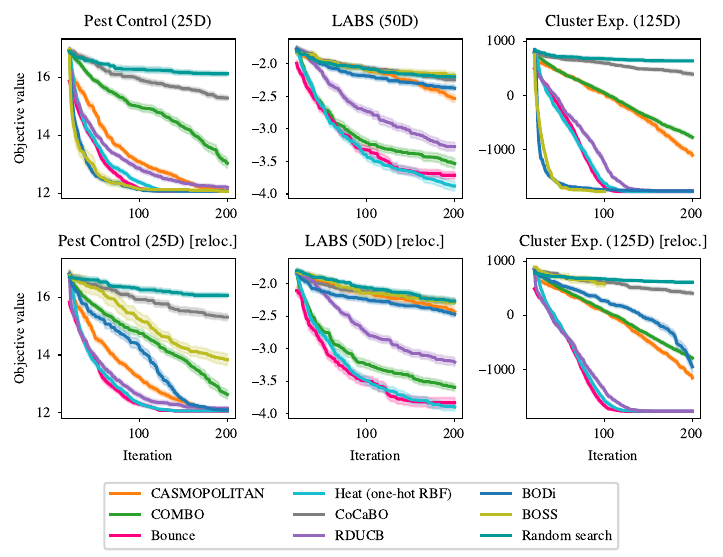}
    \caption{A fast and simple pipeline, relying on heat kernels, achieves state-of-the-art results (after relocation of the optima), matching or even outperforming more complex or slow baselines.}
    \label{fig:main_methods}
\end{figure}

\subsection{Fast and simple heat-kernel pipeline with state-of-the-art performance} \label{sec:fs_pipeline}
In Figures~\ref{fig:main_methods}~and~\ref{fig:app_methods}, all baselines are matched or outperformed (after relocation of the optima) by our heat-kernel pipeline, which consists of three components: (i) the RBF kernel (after one-hot encoding), (ii) a genetic algorithm, and (iii) a trust region. We argue that this pipeline is \textit{simple}, relying on established concepts from different fields, without introducing any new, convoluted ideas. In contrast, \texttt{Bounce}'s sophisticated nested-embeddings, or \texttt{RDUCB}'s tree-decomposition with message-passing, might make these methods less approachable for industry practitioners. Furthermore, this simple pipeline is \textit{fast}, achieving the third lowest wall-clock time from all the evaluated baselines, closely following \texttt{CoCaBO} and \texttt{Bounce} (see \Cref{fig:bars}, with details in Appendix~\ref{app:wall_clock}).

\begin{figure}[ht]
    \centering
    \includegraphics[width=.6\columnwidth]{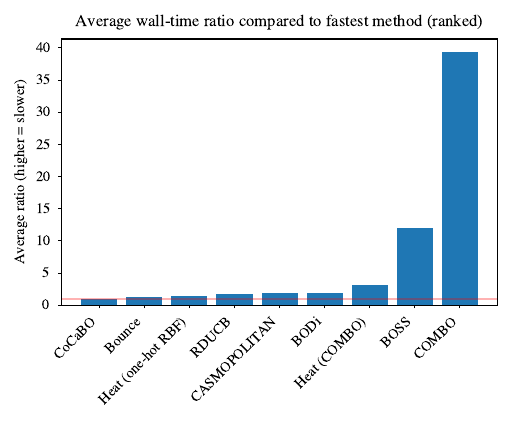}
    \caption{Our simple heat-kernel pipeline achieves the third-lowest wall-clock time, arriving closely after \texttt{CoCaBO} and \texttt{Bounce}.}
    \label{fig:bars}
\end{figure}

\section{Discussion and conclusion} \label{sec:conclusion}
In this paper, we have related different well-known combinatorial BO methods by presenting a unifying framework, based on heat kernels. This framework aims to provide a clearer picture of the existing combinatorial kernels, leading to new theoretical and practical insights.

Among others, we prove that \texttt{CASMOPOLITAN} and \texttt{COMBO} are equivalent to heat kernels (on Hamming graphs), and that both methods amount to using the RBF kernel (after one-hot encoding). By generalizing heat kernels, we establish new links with existing combinatorial kernels (e.g.\ \texttt{Bounce)} and obtain a principled framework for generating new combinatorial kernels, guaranteed to be symmetric positive semi-definite. Similarly, by extending our framework with additive structure and group-invariance, we can again link heat kernels to well-known combinatorial kernels (e.g.\ \texttt{CoCaBO} and \texttt{RDUCB}), and achieve competitive performance on a real-world permutation-invariant task (e.g.\ neural architecture search).

In our experiments, we validate these theoretical claims with empirical evidence, showing that all the above heat (and even Hamming) kernels achieve near-identical results on five different problems. Moreover, we are the first to observe the \texttt{SSK}'s degradation after relocation of the optima, extending the recent results found in \texttt{Bounce} \citep{papenmeier2023bounce}. Lastly, we propose a simple and fast heat-kernel pipeline with state-of-the-art results, hopefully making combinatorial BO more accessible.

\paragraph{Limitations}
While heat kernels show great performance on all of the evaluated problems, we emphasize that this is only true for settings with \textit{little or no a priori information}. When there already exists a large library of relevant structures, or when the inputs are known to have certain properties (e.g.\ molecules), other methods might be preferred. Moreover, our pipeline only considers categorical variables, although the extension to mixed variables (i.e.\ categorical and continuous) is straightforward \citep[see][for example]{ru2020bayesian, papenmeier2023bounce}. Lastly, our experiments do not contain (truly) ordinal variables, but we note that this is standard in recent combinatorial BO papers \citep{deshwal2023bayesian,papenmeier2023bounce}.

\paragraph{Societal impact}
Although our main contribution is theoretical (i.e. a unifying framework), it does lead to a number of practical improvements for combinatorial Bayesian optimization. For example, the simple pipeline proposed in Section~\ref{sec:fs_pipeline} could facilitate the use of combinatorial BO by a wider public. Similarly, the faster implementation described in Section~\ref{sec:faster_impl} leads to similar results with lower computational resources, increasing its accessibility. However, we believe these improvements carry minimal risks beyond the standard concerns about possible misuse of the underlying technology.

\subsubsection*{Acknowledgments}
VB was supported by ELSA (European Lighthouse on Secure and Safe AI) funded by the European Union under grant agreement No. 101070617. HM was supported by Schmidt Sciences and Research England under the Expanding Excellence in England (E3) funding stream, which was awarded to MARS: Mathematics for AI in Real-world Systems in the School of Mathematical Sciences at Lancaster University.

\bibliography{references}
\bibliographystyle{apalike}


\newpage
\appendix

\section{Proofs} \label{app:proofs}

\subsection{Proof of Theorem~\ref{thm:cascom}} \label{app:cascom_proof}

\getkeytheorem{cascom}
\begin{proof}
    For \texttt{CASMOPOLITAN}, we can rewrite Equation~\ref{eq:cas_kernel} as
    \[k\left(\mathbf{x}, \mathbf{x'}\right) = \exp \left(\frac{1}{n} \sum_{i=1}^{n} \ell_i \thinspace \delta\left(x_i, x_i^{\prime}\right)\right) = \prod_{i=1}^n \exp \left(\gamma_i \thinspace \delta\left(x_i, x_i^{\prime}\right)\right),\]
    where $\gamma_i = \frac{\ell_i}{n}$ and $\ell_i > 0$ are hyperparameters. Now, if we define $\gamma_i$ as
    \[\gamma_i = -\ln\left(\frac{1-e^{-\beta_i g_i}}{1+(g_i-1)e^{-\beta_i g_i}} \right),\]
    we recover
    \[k\left(\mathbf{x},\mathbf{x'}\right) = \prod_{i=1}^n \left(\frac{1-e^{-\beta_i g_i}}{1+(g_i-1)e^{-\beta_i g_i}}\right)^{-\delta\left(x_i, x_i^{\prime}\right)} \propto \prod_{i=1}^n \left(\frac{1-e^{-\beta_i g_i}}{1+(g_i-1)e^{-\beta_i g_i}}\right)^{1-\delta\left(x_i, x_i^{\prime}\right)},\]
    which is the ARD version of Equation~\ref{eq:kondor_simple}.

    For \texttt{COMBO}, we can rewrite Equation~\ref{eq:combo_kernel} as
    \[k(\mathbf{x}, \mathbf{x'}) = \prod_{i=1}^{n} \sum_{j=1}^{|\mathcal{X}_i|} e^{-\beta_i \lambda_j^i} f_j^i\left(x_i\right) f_j^i\left(x_i'\right) = \prod_{i=1}^n k_i\left(x_i, x^{\prime}_i\right),\]
    where
    \[k_i\left(x_i, x^{\prime}_i\right) = \sum_{j=1}^{|\mathcal{X}_i|} e^{-\beta_i \lambda_j^i} f_j^i\left(x_i\right) f_j^i\left(x_i'\right).\]
    Now, \citet{kondor2002diffusion} show that, when the subgraphs $\mathcal{G}_i$ are complete, which is the case for the Hamming graph of Definition~\ref{def:comb_graph}, $k_i\left(x_i,x_i'\right)$ becomes
    \begin{equation*}
        k_i \left(x_i, x'_i\right) =
        \begin{cases}
            \frac{1 + (g_i - 1)e^{-\beta_i g_i}}{g_i} & \text{if } x_i = x'_i \\
            \frac{1 - e^{-\beta_i g_i}}{g_i} & \text{if } x_i \neq x'_i,
        \end{cases}
    \end{equation*}
    where $g_i = |\mathcal{X}_i|$ is the number of vertices in the complete subgraph $\mathcal{G}_i = \left(\mathcal{X}_i, \mathcal{E}_i\right)$. Using this simplified $k_i \left(x_i, x'_i\right)$, we obtain
    \begin{align*}
        k \left(\mathbf{x}, \mathbf{x'}\right) &= \prod_{i=1}^n \left(\frac{1 + (g_i - 1)e^{-\beta_i g_i}}{g_i}\right)^{\delta\left(x_i, x_i^{\prime}\right)} \left(\frac{1 - e^{-\beta_i g_i}}{g_i}\right)^{1-\delta\left(x_i, x_i^{\prime}\right)} \\
        &\propto \prod_{i=1}^n \left(\frac{1-e^{-\beta_i g_i}}{1+(g_i-1)e^{-\beta_i g_i}}\right)^{1-\delta\left(x_i, x_i^{\prime}\right)},
    \end{align*}
    which is the ARD version of Equation~\ref{eq:kondor_simple}.
\end{proof}

\subsection{Proof of Proposition~\ref{prop:isohamming}} \label{app:isohamming_proof}
\getkeytheorem{isohamming}
\begin{proof}
    Without loss of generality, assume $\mathbf{z}, \mathbf{z'}$ are the one-hot encoded versions of $\mathbf{x}, \mathbf{x'}$. Now, we have
    \[k^*(\mathbf{x}, \mathbf{x'}) = \mathbbm{k} \left(\sqrt{h\left(\mathbf{x}, \mathbf{x'}\right)}\right) = \mathbbm{k} \left(\left\Vert\mathbf{z} -  \mathbf{z'}\right\Vert_2 \, \big/ \, \sqrt{2} \right) \propto k(\mathbf{z}, \mathbf{z'}).\]
    Since we assumed $k$ to be symmetric positive semi-definite, it follows that $k^*$ is also symmetric positive semi-definite.
\end{proof}

\subsection{Proof of Theorem~\ref{thm:hamphi}} \label{app:hamphi_proof}

We prove~\Cref{thm:hamphi} by showing that (1) every $\Phi$-kernel is \emph{isotropic}, (2) every isotropic kernel is a Hamming kernel and vice versa, (3) every Hamming kernel is a $\Phi$-kernel.
First, we introduce the class of isotropic kernels.

\begin{definition}
    A kernel $k: \mathcal{X} \times \mathcal{X} \to \mathbb{R}$ is called \emph{isotropic} (with respect to the graph $\mathcal{G} = (\mathcal{X}, \mathcal{E})$) if and only if
    \begin{align*}
        k(\theta(\mathbf{x}), \theta(\mathbf{x'})) = k(\mathbf{x}, \mathbf{x'}),
        &&
        \mathbf{x}, \mathbf{x'} \in \mathcal{X},
        &&
        \theta \in \mathrm{Aut}(\mathcal{G}),
    \end{align*}
    where $\mathrm{Aut}(\mathcal{G})$ denotes the group of automorphisms of the graph $\mathcal{G}$.
    This group consists of maps $\theta: \mathcal{X} \to \mathcal{X}$ such that $(\theta(\mathbf{x}), \theta(\mathbf{x'})) \in \mathcal{E}$ is equivalent to $(\mathbf{x}, \mathbf{x'}) \in \mathcal{E}$ for all $\mathbf{x}, \mathbf{x'} \in \mathcal{X}$.
\end{definition}

It is easy to prove that every $\Phi$-kernel is isotropic.
Moreover, this holds for general graphs $\mathcal{G}$ without restriction.

\begin{lemma} \label{thm:phi_impl_isotropic}
Let $k: \mathcal{X} \times \mathcal{X} \to \mathbb{R}$ be a $\Phi$-kernel in the sense of~\Cref{dfn:phi_kernel}, where $\mathcal{X}$ is the node set of an arbitrary graph $\mathcal{G} = (\mathcal{X}, \mathcal{E})$.
Then, we have that $k$ is isotropic with respect to $\mathcal{G}$.
\end{lemma}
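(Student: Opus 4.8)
The plan is to show that a $\Phi$-kernel is invariant under the action of any graph automorphism $\theta \in \mathrm{Aut}(\mathcal{G})$ by tracking how $\theta$ acts on the eigendata of the Laplacian. The key observation is that a graph automorphism $\theta$ induces a permutation matrix $\m{P}_\theta$ on $\mathbb{R}^{|\mathcal{X}|}$ (permuting coordinates according to $\theta$), and this permutation matrix commutes with the graph Laplacian $\v{\Delta}$, since $\theta$ preserves both the adjacency matrix $\m{A}$ and the degree matrix $\m{D}$: indeed $(\m{P}_\theta^\top \m{A} \m{P}_\theta)_{x x'} = \m{A}_{\theta(x)\theta(x')} = \m{A}_{x x'}$ because $(\theta(x),\theta(x')) \in \mathcal{E} \iff (x,x') \in \mathcal{E}$, and similarly $\m{P}_\theta$ fixes $\m{D}$ because automorphisms preserve vertex degrees.

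First I would fix an eigenpair $(\lambda_j, f_j)$ of $\v{\Delta}$, viewing $f_j$ as a vector in $\mathbb{R}^{|\mathcal{X}|}$, and define the pulled-back function $g_j := \m{P}_\theta f_j$, i.e.\ $g_j(x) = f_j(\theta^{-1}(x))$. From $\v{\Delta}\m{P}_\theta = \m{P}_\theta \v{\Delta}$ it follows that $\v{\Delta} g_j = \m{P}_\theta \v{\Delta} f_j = \lambda_j g_j$, so $g_j$ is again an eigenfunction with the same eigenvalue. Then I would write the kernel in matrix form as $\m{K} = \sum_j \Phi_\beta(\lambda_j) f_j f_j^\top = F \, \Phi_\beta(\v{\Lambda}) \, F^\top$ where $F$ is the orthogonal matrix of eigenvectors and $\v{\Lambda} = \mathrm{diag}(\lambda_j)$, and observe that the spectral decomposition is ($\m{P}_\theta$-)equivariant: $\m{P}_\theta^\top \m{K} \m{P}_\theta$ is again $\sum_j \Phi_\beta(\lambda_j)(\m{P}_\theta^\top f_j)(\m{P}_\theta^\top f_j)^\top$, which is the $\Phi$-kernel built from the eigendata $(\lambda_j, \m{P}_\theta^\top f_j)$ of the same matrix $\v{\Delta}$. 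Since $\Phi_\beta$ is a function of $\v{\Delta}$ alone, $\m{K} = \Phi_\beta(\v{\Delta})$ is basis-independent, and hence $\m{P}_\theta^\top \m{K} \m{P}_\theta = \Phi_\beta(\m{P}_\theta^\top \v{\Delta} \m{P}_\theta) = \Phi_\beta(\v{\Delta}) = \m{K}$. Reading off entries gives $k(\theta(\mathbf{x}),\theta(\mathbf{x'})) = k(\mathbf{x},\mathbf{x'})$.

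The one technical subtlety worth handling carefully is that the eigenvalues of $\v{\Delta}$ may have multiplicities, so the individual eigenvectors $f_j$ are not uniquely determined and need not be fixed by $\m{P}_\theta$; this is exactly why I would phrase the argument through the functional-calculus identity $\m{K} = \Phi_\beta(\v{\Delta})$ (equivalently, through $\m{P}_\theta^\top \left(\sum_j \Phi_\beta(\lambda_j) f_j f_j^\top\right) \m{P}_\theta = \sum_j \Phi_\beta(\lambda_j)(\m{P}_\theta^\top f_j)(\m{P}_\theta^\top f_j)^\top$ and the fact that the right-hand side depends only on the eigenspace projectors, which $\m{P}_\theta$ does permute correctly within each eigenspace), rather than trying to argue that $\m{P}_\theta$ fixes each $f_j$ separately. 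Other than this bookkeeping, the proof is routine linear algebra: the main content is the single fact that $\m{P}_\theta$ commutes with $\v{\Delta}$, and everything else follows.
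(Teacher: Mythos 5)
Your proposal is correct and follows essentially the same route as the paper's proof: both hinge on the single fact that the permutation matrix induced by $\theta \in \mathrm{Aut}(\mathcal{G})$ commutes with the Laplacian, and then conclude by noting that the $\Phi$-kernel depends only on the spectral data of $\boldsymbol{\Delta}$, not on the particular orthonormal eigenbasis. Your phrasing via the functional-calculus identity $\mathbf{K} = \Phi_\beta(\boldsymbol{\Delta})$ is just a slightly more explicit way of handling the eigenvalue-multiplicity point that the paper dispatches with the remark that $\Phi$-kernels ``do not depend on the choice of an orthonormal basis of eigenfunctions.''
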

\begin{proof}
Take $\theta \in \mathrm{Aut}(\mathcal{G})$. We prove $k\left(\theta(\mathbf{x}), \theta(\mathbf{x'})\right) = k\left(\mathbf{x}, \mathbf{x'}\right)$. For convenience, we denote $|\mathcal{X}| = g$.

In this proof, it is more convenient to view the eigenfunctions $f_j$ as vectors $\v{f}_j = (f_j(1), \ldots, f_j(g))^{\top}$, the graph Laplacian $\Delta$ as its matrix $\m{\Delta}$, and $\theta$, a permutation of the set $\{1, \ldots, g\}$, as the permutation matrix $\m{\Theta}$, such that $\m{\Theta}(v_1, \ldots, v_{g})^{\top} = (v_{\theta(1)}, \ldots, v_{\theta(g)})^{\top}$.
By definition, $\m{\Delta} = \m{D} - \m{A}$ where $\m{A}$ is the adjacency matrix of $\mathcal{G}$ and $\m{D}$ is the degree matrix, i.e.\ the diagonal matrix with $\m{D}_{i i} = \sum_{j=1}^{g} \m{A}_{i j}$.

In matrix form, the fact that $\theta \in \mathrm{Aut}(\mathcal{G})$ means $\m{\Theta}^{\top} \m{A} \m{\Theta} = \m{A}$, which is equivalent to $\m{A} \m{\Theta} = \m{\Theta} \m{A}$, because $\m{\Theta}$ is an orthogonal matrix.
Thus, $\m{\Delta} \m{\Theta} = \m{\Theta} \m{\Delta}$ and $\m{\Delta} \v{f}_j = \lambda_j \v{f}_j$ implies $\m{\Delta} \m{\Theta} \v{f}_j = \lambda_j \m{\Theta} \v{f}_j$.
This, together with orthogonality of $\m{\Theta}$, implies that $\{\m{\Theta} \v{f}_j\}_{j=1}^{g}$ is an orthonormal basis of eigenvectors corresponding to eigenvalues $\lambda_j$, same as $\{\v{f}_j\}_{j=1}^{g}$.
What is the same, $\{f_j \circ \theta\}_{j=1}^{g}$ is an orthonormal basis of eigenfunctions corresponding to eigenvalues $\lambda_j$.
Since $\Phi$-kernels do not depend on the choice of an orthonormal basis of eigenfunctions, we have
\begin{equation*}
    k(\theta(\mathbf{x}), \theta(\mathbf{x'}))
    =
    \sum_{j=1}^{g} \Phi (\lambda_j) f_j(\theta(\mathbf{x})) f_j(\theta(\mathbf{x'}))
    =
    k(\mathbf{x}, \mathbf{x'}),
\end{equation*}
proving that $k$ is isotropic.
\end{proof}

\begin{remark}
The converse of~\Cref{thm:phi_impl_isotropic} is not, in general, true.
Intuitively, when the group $\mathrm{Aut}(\mathcal{G})$ is small, it becomes easier to be isotropic, and the class of isotropic kernels can become larger than the class of $\Phi$-kernels. For example, in the extreme case when $\mathrm{Aut}(\mathcal{G})$ is trivial, all kernels are isotropic, but not all kernels are $\Phi$-kernels.
\end{remark}

To continue working with isotropic kernels, we need to characterize the automorphisms of $\mathcal{G}$. Let us start with the simplest case. If $\mathcal{G}$ is a complete graph with $|\mathcal{X}| = g$ nodes, then any bijection $\theta: \mathcal{X} \to \mathcal{X}$ is an automorphism. All such bijections are permutations of a $g$-element set and form the symmetric group $\mathrm{S}_g$, therefore $\mathrm{Aut}(\mathcal{G}) = \mathrm{S}_g$.

Now, let us consider the Hamming graphs of~\Cref{def:comb_graph}.
Specifically, we consider $\mathcal{G}=\MyS_{i=1}^{n}\mathcal{G}_i$ where $\mathcal{G}_i = (\mathcal{X}_i, \mathcal{E}_i)$ are complete graphs and assume all factors to be of the same size, i.e. $|\mathcal{X}_1| = |\mathcal{X}_2| = \ldots = |\mathcal{X}_n|$.
In this case, we can assume $\mathcal{G}_1 = \ldots = \mathcal{G}_n$ without loss of generality.

\begin{lemma}
    Consider $\mathcal{G}=\MyS_{i=1}^{n}\mathcal{G}_i$ with $\mathcal{G}_1 = \ldots = \mathcal{G}_n$ complete graphs having $g$ nodes.
    If $\theta \in \mathrm{Aut}(\mathcal{G})$, then
    \begin{equation*}
        \theta(x_1, \ldots, x_n)
        =
        (\theta_1(x_{\sigma(1)}), \ldots, \theta_n(x_{\sigma(n)})),
    \end{equation*}
    where $\sigma \in \mathrm{S}_n$ is a permutation of the set $\{1, \ldots, n\}$ and $\theta_i \in \mathrm{Aut}(\mathcal{G}_i) = \mathrm{S}_{g}$.
    That is, $\mathrm{Aut}(\mathcal{G}) = S_{g}^n \rtimes S_n$, where $\rtimes$ denotes the semidirect product of groups (see, e.g., \citet{robinson2003} for the definition of a semidirect product).
\end{lemma}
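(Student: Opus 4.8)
The plan is to characterize $\mathrm{Aut}(\mathcal{G})$ for the Hamming graph $\mathcal{G} = \MyS_{i=1}^n \mathcal{G}_i$ with all factors equal to a complete graph $K_g$. The inclusion $S_g^n \rtimes S_n \subseteq \mathrm{Aut}(\mathcal{G})$ is the easy direction: a tuple $(\theta_1, \dots, \theta_n; \sigma)$ acts by permuting coordinates according to $\sigma$ and then relabeling each coordinate by the corresponding $\theta_i$; one checks directly from the definition of the graph Cartesian product (two vertices are adjacent iff they differ in exactly one coordinate, where in that coordinate they are adjacent in the factor — and in a complete graph any two distinct vertices are adjacent) that this map preserves adjacency, hence is an automorphism. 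So the content is the reverse inclusion $\mathrm{Aut}(\mathcal{G}) \subseteq S_g^n \rtimes S_n$, together with verifying the semidirect-product structure on the abstract group level.

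For the reverse inclusion I would use the standard fact about Cartesian products of connected graphs: by the theorem of Sabidussi and Vizing (see \citet{imrich2000}), every connected graph factors uniquely into prime graphs with respect to the Cartesian product, and the automorphism group of a Cartesian product of connected prime graphs $G_1 \MyS \cdots \MyS G_n$ is generated by the automorphisms of the individual factors together with the permutations of isomorphic factors — i.e.\ it is exactly $\big(\prod_i \mathrm{Aut}(G_i)\big) \rtimes (\text{permutations of isomorphic factors})$. Here each factor $\mathcal{G}_i = K_g$ is connected; $K_g$ is prime with respect to the Cartesian product for $g \ge 2$ (a complete graph is not a nontrivial Cartesian product, since a Cartesian product $K_a \MyS K_b$ with $a,b \ge 2$ contains an induced $4$-cycle and hence is not complete), and for $g = 1$ the factor is trivial and can be discarded. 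Since all $n$ factors are isomorphic (to $K_g$), the permutation part is the full symmetric group $S_n$, and $\mathrm{Aut}(K_g) = S_g$; the cited theorem then gives $\mathrm{Aut}(\mathcal{G}) = S_g^n \rtimes S_n$ both as a set of maps and as an abstract group. Unwinding what "the product of factor automorphisms, post-composed with a factor permutation" does to a vertex $(x_1, \dots, x_n)$ yields precisely $\theta(x_1,\dots,x_n) = (\theta_1(x_{\sigma(1)}), \dots, \theta_n(x_{\sigma(n)}))$ for suitable $\sigma \in S_n$ and $\theta_i \in S_g$, which is the displayed formula.

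The main obstacle is really just bookkeeping: one must be careful about the exact form of the action (whether $\sigma$ acts on indices or on values, and the resulting convention for composition), and about the edge case $g=1$, where $K_1$ is a single vertex, $S_1$ is trivial, and the formula degenerates harmlessly. If one prefers a self-contained argument over invoking the Sabidussi–Vizing machinery, an alternative is a direct combinatorial proof: show that an automorphism must map each "line" (a maximal clique obtained by fixing all but one coordinate — in a Hamming graph over complete factors these are exactly the maximal cliques, and there are $n$ parallel classes of them) to a line, deduce that it permutes the $n$ parallel classes of lines by some $\sigma \in S_n$, and then observe that restricted to each coordinate it must act as an arbitrary bijection of the $g$-element factor. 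I would sketch this direct route as the proof, since it is elementary and keeps the paper self-contained, and only cite \citet{imrich2000} for the background. Finally, that the resulting group is the semidirect product $S_g^n \rtimes S_n$ — with $S_n$ acting on $S_g^n$ by permuting the coordinates — follows by computing the composition of two such maps and matching it against the semidirect-product multiplication rule.
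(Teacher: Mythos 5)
Your proposal is correct and follows essentially the same route as the paper, whose entire proof is a citation of Theorem~4.15 in \citet{imrich2000} (the Sabidussi--Vizing characterization of automorphisms of Cartesian products of connected prime graphs); your added details---verifying that $K_g$ is prime for $g \ge 2$ via the induced $4$-cycle in $K_a \MyS K_b$, and handling the degenerate $g=1$ case---are correct and merely make explicit what the paper leaves to the reference. The self-contained "lines/maximal cliques" argument you sketch as an alternative is a reasonable elementary substitute but is not what the paper does.
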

\begin{proof}
Follows from \citet[Theorem 4.15]{imrich2000}.
\end{proof}

With this, we can connect Hamming kernels to the isotropic kernels.

\begin{lemma} \label{thm:isotropic_equiv_hamming}
    Let $\mathcal{G} = (\mathcal{X}, \mathcal{E})$ be a Hamming graph from~\Cref{def:comb_graph}, with $|\mathcal{X}_1| = |\mathcal{X}_2| = \ldots = |\mathcal{X}_n|$, then $k: \mathcal{X} \times \mathcal{X} \to \mathbb{R}$ is isotropic (with respect to $\mathcal{G}$) if and only if $k$ is a function of Hamming distance.
\end{lemma}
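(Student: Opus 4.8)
The plan is to reduce both directions of the equivalence to a single combinatorial fact about the automorphism group: namely, that $\mathrm{Aut}(\mathcal{G})$ acts transitively on the set of \emph{ordered} pairs of vertices at any prescribed Hamming distance. Once this is in hand, the ``only if'' direction is immediate, and the ``if'' direction only needs that automorphisms preserve Hamming distance. Throughout I identify each $\mathcal{X}_i$ with $\{0,1,\dots,g-1\}$ where $g = |\mathcal{X}_1| = \dots = |\mathcal{X}_n|$, and I may assume $g \geq 2$ (if $g = 1$ the space is a single point and the statement is vacuous), so that each complete factor $\mathcal{G}_i$ has $\mathrm{Aut}(\mathcal{G}_i) = \mathrm{S}_g$.

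For the ``if'' direction, suppose $k(\mathbf{x},\mathbf{x'}) = \mathbbm{k}(h(\mathbf{x},\mathbf{x'}))$ for some function $\mathbbm{k}$. Any $\theta \in \mathrm{Aut}(\mathcal{G})$ preserves adjacency and hence the graph-geodesic distance on $\mathcal{G}$; since $\mathcal{G} = \MyS_{i=1}^n \mathcal{G}_i$ is a Cartesian product of complete graphs, this geodesic distance equals $\sum_{i=1}^n \big(1 - \delta(x_i,x_i')\big) = h(\mathbf{x},\mathbf{x'})$, because distances add over Cartesian factors and within each complete graph every nonzero distance is $1$. Therefore $h(\theta(\mathbf{x}),\theta(\mathbf{x'})) = h(\mathbf{x},\mathbf{x'})$ and $k$ is isotropic. (Equivalently, one can substitute the explicit form of $\theta$ from the preceding lemma and observe that a coordinate-agreement of $\mathbf{x},\mathbf{x'}$ is carried to a coordinate-agreement of $\theta(\mathbf{x}),\theta(\mathbf{x'})$, up to the permutation $\sigma$.) Conversely, assume $k$ is isotropic and let $h(\mathbf{x},\mathbf{x'}) = h(\mathbf{y},\mathbf{y'})$. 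By the transitivity claim there is $\theta \in \mathrm{Aut}(\mathcal{G})$ with $\theta(\mathbf{x}) = \mathbf{y}$ and $\theta(\mathbf{x'}) = \mathbf{y'}$, so $k(\mathbf{x},\mathbf{x'}) = k(\theta(\mathbf{x}),\theta(\mathbf{x'})) = k(\mathbf{y},\mathbf{y'})$; hence $k$ is constant on each Hamming-distance level set, i.e.\ a function of $h$ (equivalently, of $\sqrt{h}$, so a Hamming kernel in the sense of Definition~\ref{def:hammingclass}).

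It remains to prove the transitivity claim, which is where the structural description $\mathrm{Aut}(\mathcal{G}) = \mathrm{S}_g^n \rtimes \mathrm{S}_n$ from the previous lemma does the work. Write an automorphism as $\theta(\mathbf{z}) = (\theta_1(z_{\sigma(1)}),\dots,\theta_n(z_{\sigma(n)}))$ with $\sigma \in \mathrm{S}_n$ and $\theta_i \in \mathrm{S}_g$. Given pairs $(\mathbf{x},\mathbf{x'})$ and $(\mathbf{y},\mathbf{y'})$ with $h(\mathbf{x},\mathbf{x'}) = h(\mathbf{y},\mathbf{y'}) = d$, set $S = \{i : x_i \neq x_i'\}$ and $T = \{i : y_i \neq y_i'\}$; since $|S| = |T| = d$, choose $\sigma \in \mathrm{S}_n$ with $\sigma(T) = S$. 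Then for each $i$ the two entries of $(x_{\sigma(i)}, x'_{\sigma(i)})$ are distinct exactly when $i \in T$, i.e.\ exactly when the two entries of $(y_i, y_i')$ are distinct; in either case one can pick $\theta_i \in \mathrm{S}_g$ with $\theta_i(x_{\sigma(i)}) = y_i$ and $\theta_i(x'_{\sigma(i)}) = y_i'$ (extending a partial bijection on one or two points to all of $\{0,\dots,g-1\}$, which is possible since $g \geq 2$). The resulting $\theta$ satisfies $\theta(\mathbf{x}) = \mathbf{y}$ and $\theta(\mathbf{x'}) = \mathbf{y'}$ coordinatewise. The main obstacle is precisely this transitivity step: it is elementary but requires the explicit form of $\mathrm{Aut}(\mathcal{G})$ and careful bookkeeping of which coordinates agree, and it relies essentially on the hypothesis $|\mathcal{X}_1| = \dots = |\mathcal{X}_n|$, both to identify the factors and to make the coordinate-permuting factor $\mathrm{S}_n$ available. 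Without equal sizes, $\mathrm{S}_n$ no longer acts, transitivity fails, and so does the lemma — consistent with the counterexample promised in this appendix.
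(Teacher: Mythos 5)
Your proof is correct and takes essentially the same approach as the paper: the paper's argument normalizes an arbitrary pair $(\mathbf{x},\mathbf{x'})$ to the canonical pair $\bigl((1,\dots,1,0,\dots,0),\mathbf{0}\bigr)$ via a coordinate-wise relabeling followed by a coordinate permutation, which is exactly your two-point-transitivity claim specialized to a fixed base pair. The converse direction (automorphisms preserve Hamming distance) is likewise the same in both; your packaging via geodesic distance on the Cartesian product versus the paper's direct appeal to the explicit form of $\theta$ is an immaterial difference.
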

\begin{proof}
To simplify notation, we assume, without loss of generality, $\mathcal{X}_i = \{0, 1, \ldots, g-1\}$. Take $\mathbf{x}, \mathbf{x'} \in \mathcal{X}$. Consider $\theta \in \mathrm{Aut}(\mathcal{G})$ with $\theta(\mathbf{x}) = (\theta_1(x_1), \ldots, \theta_n(x_n))$.
Take $\theta_i$ with $\theta_i(x_i') = 0$ and $\theta_i(x_i) = \mathbbm{1}_{x_i \neq x_i'}$.
This constitutes a valid automorphism of $\mathcal{G}$.
Thus, we have
\begin{equation*}
    k(\mathbf{x}, \mathbf{x}')
    =
    k(\theta(\mathbf{x}), \theta(\mathbf{x}))
    =
    k(\widetilde{\mathbf{x}}, \mathbf{0})
    ,
\end{equation*}
where $\widetilde{\mathbf{x}} = (\widetilde{x}_1, \ldots, \widetilde{x}_n)$ and $\widetilde{x}_i = 0$ if $x_i = x_i'$ or $\widetilde{x}_i = 1$ otherwise.
We can further consider $\widetilde{\theta} \in \mathrm{Aut}(\mathcal{G})$ such that $\widetilde{\theta}(\mathbf{x}) = (x_{\sigma(1)}, \ldots, x_{\sigma(n)})$ where $\sigma$ is such that $\widetilde{\theta}(\widetilde{\mathbf{x}})$ is of form $(1, 1, \ldots, 1, 0, 0, \ldots, 0)$.
If $\hat{\mathbf{x}} = \sigma(\widetilde{\mathbf{x}})$, then
\begin{equation*}
    k(\mathbf{x}, \mathbf{x}')
    =
    k(\widetilde{\mathbf{x}}, \v{0})
    =
    k(\hat{\mathbf{x}}, \v{0}).
\end{equation*}
It follows that $k$ depends only on the number of indices $i$ for which $x_i = x_i'$, but not on particular values of $x_i, x_i'$ and not on the particular indices $i$.
That is, $k$ only depends on the Hamming distance between $\mathbf{x}$ and $\mathbf{x}'$.

Conversely, assume $k$ is a function of Hamming distance, or rather of its square root, as stated in Definition~\ref{def:hammingclass}. Take $\theta \in \mathrm{Aut}(\mathcal{G})$.
We prove $k(\theta(\mathbf{x}), \theta(\mathbf{x}')) = k(\mathbf{x}, \mathbf{x}')$.
Here, the key observation is that $h(\theta(\mathbf{x}), \theta(\mathbf{x}')) = h(\mathbf{x}, \mathbf{x}')$, which is obvious from the form of $\theta$.
This immediately implies
\begin{equation*}
    k(\theta(\mathbf{x}), \theta(\mathbf{x}'))
    =
    \mathbbm{k}\left(\sqrt{h(\theta(\mathbf{x}), \theta(\mathbf{x}'))}\right)
    =
    \mathbbm{k}\left(\sqrt{h(\mathbf{x}, \mathbf{x}')}\right)
    =
    k(\mathbf{x}, \mathbf{x}')
\end{equation*}
and proves the claim.
\end{proof}

Finally, we prove that a Hamming kernel is a $\Phi$-kernel.
\begin{lemma} \label{thm:hamming_impl_phi}
    If $k: \mathcal{X} \times \mathcal{X} \to \mathbb{R}$ is a kernel which is a function of Hamming distance, then $k$ is a $\Phi$-kernel.
\end{lemma}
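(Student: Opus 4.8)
The plan is to exploit the explicit spectral structure of the Hamming graph $\mathcal{G} = \MyS_{i=1}^n \mathcal{G}_i$ under the running hypothesis $|\mathcal{X}_1| = \dots = |\mathcal{X}_n| = g$. The Laplacian of a complete graph $K_g$ has eigenvalue $0$ (with the constant eigenvector) and eigenvalue $g$ with multiplicity $g-1$; since the Laplacian of a Cartesian product of graphs has as eigenvalues the sums of eigenvalues of the factors, the distinct eigenvalues of $\boldsymbol{\Delta}$ are $0, g, 2g, \dots, ng$, and I will write $V_0, \dots, V_n$ for the corresponding eigenspaces and $P_0, \dots, P_n$ for the orthogonal projections onto them. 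A $\Phi$-kernel depends on $\Phi$ only through its values on the distinct eigenvalues, so every $\Phi$-kernel equals $\sum_{m=0}^n \Phi(gm)\, P_m(x,x')$, where $P_m(x,x') = \sum_{j:\lambda_j = gm} f_j(x) f_j(x')$ is the kernel associated with the projection $P_m$. Conversely, to show that a Hamming kernel $k$ is a $\Phi$-kernel, it suffices to (a) express $k$ in the "basis" $\{P_m\}_{m=0}^n$, (b) check that the coefficients are nonnegative, and (c) read off a valid $\Phi$.

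For steps (a) and (b), I would first show that each $P_m$, viewed as the function $(x,x') \mapsto P_m(x,x')$, depends only on $h(x,x')$. Indeed, as established in the proof of the preceding lemma, every $\theta \in \mathrm{Aut}(\mathcal{G})$ commutes with $\boldsymbol{\Delta}$ (in matrix form $\mathbf{\Theta}\boldsymbol{\Delta} = \boldsymbol{\Delta}\mathbf{\Theta}$), hence $\mathbf{\Theta}$ preserves each eigenspace $V_m$ and therefore commutes with the projection $P_m$; this says exactly that the kernel $P_m$ is isotropic, so by \Cref{thm:isotropic_equiv_hamming} it is a function of Hamming distance. Next, $\{P_0, \dots, P_n\}$ is a basis of the space of $\mathcal{X}\times\mathcal{X}$ matrices whose entries depend only on Hamming distance: they are linearly independent as matrices (mutually orthogonal nonzero idempotents: from $\sum_m a_m P_m = 0$, multiplying by $P_{m'}$ gives $a_{m'} P_{m'} = 0$), there are $n+1$ of them, and that space has dimension exactly $n+1$, spanned by the indicator matrices $(A_d)_{x,x'} = \mathbbm{1}[h(x,x') = d]$ for $d = 0, 1, \dots, n$ (all these values of $d$ are attained since $g \ge 2$). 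This is the classical fact that the Bose--Mesner algebra of the Hamming association scheme $H(n,g)$ is spanned both by its adjacency matrices and by its primitive idempotents.

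For step (c), given the Hamming kernel $k(x,x') = \mathbbm{k}(\sqrt{h(x,x')})$, write its matrix as $\sum_{m=0}^n c_m P_m$ with unique real $c_m$ (using the basis above, which applies because the matrix of $k$ lies in that space). As a matrix, $\sum_m c_m P_m$ acts on $V_m$ as multiplication by $c_m$, so its eigenvalues are exactly $c_0, \dots, c_n$; positive semi-definiteness of $k$ then forces $c_m \ge 0$ for every $m$. Define $\Phi : \mathbb{R} \to [0,\infty)$ by $\Phi(gm) = c_m$ for $m \in \{0,\dots,n\}$ and $\Phi(\lambda) = 0$ otherwise (unambiguous, since $0, g, \dots, ng$ are distinct). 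Then $k(x,x') = \sum_{m=0}^n \Phi(gm) P_m(x,x') = \sum_{m=0}^n \Phi(gm) \sum_{j:\lambda_j = gm} f_j(x) f_j(x') = \sum_{j=1}^{|\mathcal{X}|} \Phi(\lambda_j) f_j(x) f_j(x')$, which is exactly the form required by \Cref{dfn:phi_kernel}. Together with \Cref{thm:phi_impl_isotropic} and \Cref{thm:isotropic_equiv_hamming}, this completes the proof of \Cref{thm:hamphi}.

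The main obstacle is the structural step (a)--(b): showing that the spectral projections $P_m$ themselves lie in the algebra of Hamming-distance functions and form a basis of it. Once this is in hand, the nonnegativity of the coefficients and the assembly of $\Phi$ are routine linear algebra. Two side computations deserve care but present no real difficulty: the eigenvalues of the product Laplacian (reduce to $K_g$ and use additivity under Cartesian products) and the dimension count for the Hamming-distance algebra (which uses that Hamming distance attains every value in $\{0,\dots,n\}$, true since $g \ge 2$; the case $g = 1$ is trivial, $\mathcal{X}$ being a single point).
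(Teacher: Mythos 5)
Your proof is correct, and it takes a genuinely different route from the paper's. The paper works directly with the product eigenfunctions $f_{j_1,\dots,j_n}$ and verifies by an explicit telescoping computation (repeatedly using $\sum_{x_i'\neq x_i} f_{j_i}(x_i') = -f_{j_i}(x_i)$) that each one is an eigenfunction of the integral operator of $k$, with an eigenvalue given by an alternating sum that depends only on the number of non-constant factors; Mercer's theorem then yields the $\Phi$-kernel form. You instead argue structurally through the Bose--Mesner algebra of the Hamming scheme: the spectral projections $P_0,\dots,P_n$ commute with $\mathrm{Aut}(\mathcal{G})$, hence are themselves Hamming functions by \Cref{thm:isotropic_equiv_hamming}, and since they are $n+1$ mutually orthogonal nonzero idempotents inside the $(n+1)$-dimensional space spanned by the distance-indicator matrices, they form a basis of it; expanding $k=\sum_m c_m P_m$ and reading off $\Phi(gm)=c_m$ finishes the argument. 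Your route buys several things: it avoids the paper's somewhat delicate inductive computation, it reuses the isotropy machinery already established for the other direction of \Cref{thm:hamphi}, and it makes explicit the nonnegativity $c_m\ge 0$ required by \Cref{dfn:phi_kernel} (via positive semi-definiteness of $k$), a point the paper passes over quickly with its appeal to Mercer's theorem. What the paper's computation buys in exchange is an explicit closed-form expression for the eigenvalues $\lambda'_{j_1,\dots,j_n}$ in terms of $\kappa$, which your dimension-count argument does not produce. The only points deserving care in your write-up --- that the distinct Laplacian eigenvalues $0,g,\dots,ng$ require $g\ge 2$, and that all Hamming distances $0,\dots,n$ are attained --- you have already flagged and handled correctly, including the trivial $g=1$ case.
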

\begin{proof}
Let us start with the simplest case, when $\mathcal{G} = \mathcal{G}_1$ is a complete graph with $\mathcal{X} = \{0, \ldots, g-1\}$.
For such $\mathcal{G}$, the first eigenfunction $f_1$ of the Laplacian is the constant function $f_1(x) \equiv 1$, which correspond to eigenvalue~$0$.
Other eigenfunctions $f_2, \ldots, f_g$ are an arbitrary orthonormal basis in the linear space of functions that sum up to~$0$: $\{f: \mathcal{X} \to \mathbb{R} : \sum_{j=0}^{g-1} f(j) = 0\}$, which correspond to eigenvalue~$g$.

On the other hand, in this case we have $k(x, x') = \alpha \mathbbm{1}_{x = x'} + \beta \mathbbm{1}_{x \neq x'}$ as $k$ may only take two distinct values, depending on whether $x$ is equal to $x'$ or not.
We prove that $k$ is a $\Phi$-kernel using Mercer's theorem.
For this, we check that the eigenfunctions of the Laplacian $f_j$ are also eigenfunctions of the covariance operator $(k f) (x) = \sum_{x'=0}^{g-1} k(x, x') f(x')$.
Furthermore, we verify that if two eigenfunctions $f_i, f_j$ correspond to the same eigenvalue $\lambda$ of the Laplacian, then they correspond to the same eigenvalue $\lambda'$---note that $\lambda'$ can differ from $\lambda$---of the covariance operator $k$, i.e that $\Delta f_i = \lambda f_i$ and $\Delta f_j = \lambda f_j$ implies $k f_i = \lambda' f_i$, $k f_j = \lambda' f_j$.

Let us show that $f_1$ is an eigenfunction of $k$.
Write
\begin{equation*}
    \sum_{x'=0}^{g-1} k(x, x') f_1(x')
    =
    \sum_{x'=0}^{g-1} k(x, x')
    =
    \alpha + (g-1) \beta
    =
    (\alpha + (g-1) \beta) f_1(x).
\end{equation*}
Similarly, we can show that $f_2, \ldots, f_g$ are eigenfunctions of $k$. Take $j \geq 2$ and write
\begin{equation*}
    \sum_{x'=0}^{g-1} k(x, x') f_j(x')
    =
    \alpha f_j(x) + \beta \overbracket{\sum_{\substack{0 \leq x' \leq g-1 \\ x' \neq x}} f_j(x')}^{-f_j(x)}
    =
    (\alpha - \beta) f_j(x)
    .
\end{equation*}
This also shows that the eigenvalue corresponding to $f_j$ with $j \geq 2$ is the same and completes the proof for the simple case when $\mathcal{G} = \mathcal{G}_1$.

The next step is to consider a Hamming graph $\mathcal{G}=\MyS_{i=1}^{n}\mathcal{G}_i$ where $\mathcal{G}_i = (\mathcal{X}_i, \mathcal{E}_i)$ are complete graphs with $|\mathcal{X}_1| = |\mathcal{X}_2| = \ldots = |\mathcal{X}_n| = g$.
Without loss of generality, we assume $\mathcal{G}_1 = \mathcal{G}_2 = \ldots = \mathcal{G}_n$.
Let $f_{j}$ denote the eigenfunctions of the graph Laplacian on the factors $\mathcal{G}_i$, such that $f_{1}, \ldots, f_{g}$ form an orthonormal basis, exactly as in the simple case considered above.
Then, an orthonormal basis consisting of eigenfunctions of the graph Laplacian on $\mathcal{G}$ can be obtained by considering all possible products $f_{j_1, \ldots, j_n}(x_1, \ldots, x_n) = f_{j_1}(x_1) \ldots f_{j_n}(x_n)$:
\begin{equation*}
    \Delta f_{j_1, \ldots, j_n} = \underbrace{(\lambda_{j_1} + \ldots + \lambda_{j_n})}_{\lambda_{j_1, \ldots, j_n}} f_{j_1, \ldots, j_n}.
\end{equation*}
Since $\lambda_1 = 0$ and $\lambda_2 = \ldots = \lambda_{g} = g$, we have $\lambda_{i_1, \ldots, i_n} = \lambda_{j_1, \ldots, j_n}$ if and only if the tuples $(i_1, \ldots, i_n)$ and $(j_1, \ldots, j_n)$ have the same number of zeros.

Under our assumptions, $k$ must be of form $k(\mathbf{x}, \mathbf{x}') = \kappa(\mathbbm{1}_{x_1 = x_1'}, \mathbbm{1}_{x_2 = x_2'}, \ldots, \mathbbm{1}_{x_n = x_n'})$ for some $\kappa: \{0, 1\}^n \to \mathbb{R}$.
Let us prove that $f_{j_1, \ldots, j_n}$ are also eigenfunctions of the operator $(k f) (\mathbf{x}) = \sum_{\mathbf{x}' \in \mathcal{X}} k(\mathbf{x}, \mathbf{x}') f(\mathbf{x}')$ and that $k f_{j_1, \ldots, j_n} = \lambda_{j_1, \ldots, j_n}' f_{j_1, \ldots, j_n}$ where $\lambda_{j_1, \ldots, j_n}'$ only depends on the number of zeros in the tuple $(j_1, \ldots, j_n)$.
By Mercer's theorem, this implies that $k$ is a $\Phi$-kernel.

Without loss of generality, assume $f(\mathbf{x}) = f_{j_1}(x_1) f_{j_2}(x_2) \ldots f_{j_l}(x_l)$ for some $l \leq n$, where $j_1, j_2, \ldots, j_l \geq 2$, i.e. $f$ is a product of $l$ non-constant eigenfunctions and $n-l$ constant ones, ordered such that the non-constant ones come first.
Denote $\mathbf{x}_{i:} = (x_i, \ldots, x_n)$, $\mathcal{X}_{i:} = \mathcal{X}_i \times \ldots \times \mathcal{X}_n$, and $f_{i:}(\mathbf{x}_{i:}) = f_{j_i}(x_i) \ldots f_{j_l}(x_l)$.
Write
\begin{align*}
(k f) (\mathbf{x})
&=
\sum_{\mathbf{x}' \in \mathcal{X}} k(\mathbf{x}, \mathbf{x}') f(\mathbf{x}')
=
\sum_{\mathbf{x}' \in \mathcal{X}} \kappa(\mathbbm{1}_{x_1 = x_1'}, \mathbbm{1}_{x_2 = x_2'}, \ldots, \mathbbm{1}_{x_n = x_n'}) f(\mathbf{x}')
\\
&=
\sum_{\mathbf{x}_{2:}' \in \mathcal{X}_{2:}}
\left(
    \sum_{x_1' = 0}^{g-1}
    \kappa(\mathbbm{1}_{x_1 = x_1'}, \mathbbm{1}_{x_2 = x_2'}, \ldots, \mathbbm{1}_{x_n = x_n'})
    f_{j_1}(x_1')
\right)
f_{2:}(\mathbf{x}_{2:}')
.
\end{align*}
Now, using the fact that $\sum_{\substack{0 \leq x_1' \leq g-1 \\ x_1' \neq x_1}} f_{j_1}(x_1') = -f_{j_1}(x_1)$, continue
\begin{align*}
&=
f_{j_1}(x_1)
\sum_{\mathbf{x}_{2:}' \in \mathcal{X}_{2:}}
\left(
    \kappa(1, \mathbbm{1}_{x_2 = x_2'}, \ldots, \mathbbm{1}_{x_n = x_n'})
    -
    \kappa(0, \mathbbm{1}_{x_2 = x_2'}, \ldots, \mathbbm{1}_{x_n = x_n'})
\right)
f_{2:}(\mathbf{x}_{2:}')
\\
&=
f_{j_1}(x_1)
\sum_{\mathbf{x}_{3:}' \in \mathcal{X}_{3:}}
\left(
    \sum_{x_2' = 0}^{g_2-1} \kappa(1, \mathbbm{1}_{x_2 = x_2'}, \ldots, \mathbbm{1}_{x_n = x_n'})
    f_{j_2}(x_2')
\right)
f_{3:}(\mathbf{x}_{3:}')
\\
&-
f_{j_1}(x_1)
\sum_{\mathbf{x}_{3:}' \in \mathcal{X}_{3:}}
\left(
    \sum_{x_2' = 0}^{g_2-1} \kappa(0, \mathbbm{1}_{x_2 = x_2'}, \ldots, \mathbbm{1}_{x_n = x_n'})
    f_{j_2}(x_2')
\right)
f_{3:}(\mathbf{x}_{3:}')
\\
&=
f_{j_1}(x_1)
f_{j_2}(x_2)
\sum_{\mathbf{x}_{3:}' \in \mathcal{X}_{3:}}
\left(
    \kappa(1, 1, \ldots, \mathbbm{1}_{x_n = x_n'})
    -
    \kappa(1, 0, \ldots, \mathbbm{1}_{x_n = x_n'})
\right)
f_{3:}(\mathbf{x}_{3:}')
\\
&-
f_{j_1}(x_1)
f_{j_2}(x_2)
\sum_{\mathbf{x}_{3:}' \in \mathcal{X}_{3:}}
\left(
    \kappa(0, 1, \ldots, \mathbbm{1}_{x_n = x_n'})
    -
    \kappa(0, 0, \ldots, \mathbbm{1}_{x_n = x_n'})
\right)
f_{3:}(\mathbf{x}_{3:}')
.
\end{align*}
Repeating the same steps multiple times, we arrive at
\begin{equation*}
=
f_{j_1}(x_1)
f_{j_2}(x_2)
\ldots
f_{j_l}(x_l)
\underbrace{
\sum_{\v{a} \in \{0, 1\}^{l}}
(-1)^{1 + \sum_i a_i}
\sum_{\mathbf{x}_{l+1:}' \in \mathcal{X}_{l+1:}}
\kappa(a_1, \ldots, a_l, \mathbbm{1}_{x_{l+1} = x_{l+1}'}, \mathbbm{1}_{x_{n} = x_{n}'})
}_{\text{a constant, denote it by } \lambda}
\end{equation*}
It follows that $(k f) (\mathbf{x}) = \lambda f(\mathbf{x})$, where $\lambda$ is the eigenvalue, equal to the alternating sum of above.
Importantly, since $k$ is a function of Hamming distance, $\kappa$ only depend on the number of occurrences of $1$ in its arguments, thus $\lambda$ only depends on $l$, or, equivalently, on $n-l$, which is precisely the number of zeros in the tuple of indexes.
This proves the claim.
\end{proof}

\getkeytheorem{hamphi}
\begin{proof}
    By~\Cref{thm:hamming_impl_phi}, every Hamming kernel is a $\Phi$-kernel. By the combination of~\Cref{thm:phi_impl_isotropic} and~\Cref{thm:isotropic_equiv_hamming}, every $\Phi$-kernel is a Hamming kernel.
\end{proof}

We now show that this is not, in general, the case for Hamming graphs for which $|\mathcal{X}_1|, |\mathcal{X}_2|, \ldots, |\mathcal{X}_n|$ can differ.

\begin{example}
Consider $\mathcal{X}_1 = \mathcal{X}_2 = \{0, 1\}$ and $\mathcal{X}_3 = \{0,1,2,3\}$, the respective complete graphs $\mathcal{G}_i = (\mathcal{X}_i, \mathcal{E}_i)$, and their product $\mathcal{G} = \MyS_{i=1}^3 \mathcal{G}_i$ with $\mathcal{G} = (\mathcal{X}, \mathcal{E})$, $\mathcal{X} = \mathcal{X}_1 \times \mathcal{X}_2 \times \mathcal{X}_3$.

Since $\mathcal{G}$ is a Cartesian product of $\mathcal{G}_i$, every eigenvalue $\lambda$ of the graph Laplacian on $\mathcal{G}$ is of form $\lambda = \lambda_1 + \lambda_2 + \lambda_3$, where $\lambda_i$ is an eigenvalue of the graph Laplacian on $\mathcal{G}_i$.
On $\mathcal{G}_1=\mathcal{G}_2$, there are two distinct eigenvalues: 0 and 2.
The distinct eigenvalues on $\mathcal{G}_3$ are 0 and 4.
This means that there are 5 distinct eigenvalues on $\mathcal{G}$:
\begin{align*}
    0 &= 0 + 0 + 0,
    \\
    2 &= 2 + 0 + 0 = 0 + 2 + 0,
    \\
    4 &= 2 + 2 + 0 = 0 + 0 + 4,
    \\
    6 &= 2 + 0 + 4 = 0 + 2 + 4,
    \\
    8 &= 2 + 2 + 4.
\end{align*}
It follows that the eigenvalues of all $\Phi$-kernels are of form $\Phi(0), \Phi(2), \Phi(4), \Phi(6), \Phi(8)$, which implies that a $\Phi$-kernel on $\mathcal{G}$ can have no more than 5 distinct eigenvalues.
We will show an example of a kernel $k: \mathcal{X} \times \mathcal{X} \to \mathbb{R}$ which is a function of the Hamming distance on $\mathcal{G}$, and which has 6 distinct eigenvalues.
This will prove that the class of $\Phi$-kernels on $\mathcal{G}$ is strictly smaller than the class of Hamming kernels.

Consider the kernel $k: \mathcal{X} \times \mathcal{X} \to \mathbb{R}$ defined by
\begin{equation*}
    k(\mathbf{x}, \mathbf{x}')
    =
    \begin{cases}
        10, & h(\mathbf{x}, \mathbf{x}') = 0, \\
        6, & h(\mathbf{x}, \mathbf{x}') = 1, \\
        4, & h(\mathbf{x}, \mathbf{x}') = 2, \\
        3, & h(\mathbf{x}, \mathbf{x}') = 3, \\
    \end{cases}
\end{equation*}
where $h(\bdot, \bdot')$ is the Hamming distance on $\mathcal{G}$.
Consider the ordering of all possible inputs in the following way:
\begin{gather*}
    (0, 0, 0),~(0, 0, 1),~(0, 0, 2),~(0, 0, 3),~(0, 1, 0),~(0, 1, 1),~(0, 1, 2),~(0, 1, 3), \\
    (1, 0, 0),~(1, 0, 1),~(1, 0, 2),~(1, 0, 3),~(1, 1, 0),~(1, 1, 1),~(1, 1, 2),~(1, 1, 3).
\end{gather*}
Then $k$ can be represented as a matrix of size $|\mathcal{X}| \times |\mathcal{X}| = 16 \times 16$.
This matrix is
\setcounter{MaxMatrixCols}{20}
\begin{equation*}
\begin{bmatrix}
  10 & 6 & 6 & 6 & 6 & 4 & 4 & 4 & 6 & 4 & 4 & 4 & 4 & 3 & 3 & 3\\
  6 & 10 & 6 & 6 & 4 & 6 & 4 & 4 & 4 & 6 & 4 & 4 & 3 & 4 & 3 & 3\\
  6 & 6 & 10 & 6 & 4 & 4 & 6 & 4 & 4 & 4 & 6 & 4 & 3 & 3 & 4 & 3\\
  6 & 6 & 6 & 10 & 4 & 4 & 4 & 6 & 4 & 4 & 4 & 6 & 3 & 3 & 3 & 4\\
  6 & 4 & 4 & 4 & 10 & 6 & 6 & 6 & 4 & 3 & 3 & 3 & 6 & 4 & 4 & 4\\
  4 & 6 & 4 & 4 & 6 & 10 & 6 & 6 & 3 & 4 & 3 & 3 & 4 & 6 & 4 & 4\\
  4 & 4 & 6 & 4 & 6 & 6 & 10 & 6 & 3 & 3 & 4 & 3 & 4 & 4 & 6 & 4\\
  4 & 4 & 4 & 6 & 6 & 6 & 6 & 10 & 3 & 3 & 3 & 4 & 4 & 4 & 4 & 6\\
  6 & 4 & 4 & 4 & 4 & 3 & 3 & 3 & 10 & 6 & 6 & 6 & 6 & 4 & 4 & 4\\
  4 & 6 & 4 & 4 & 3 & 4 & 3 & 3 & 6 & 10 & 6 & 6 & 4 & 6 & 4 & 4\\
  4 & 4 & 6 & 4 & 3 & 3 & 4 & 3 & 6 & 6 & 10 & 6 & 4 & 4 & 6 & 4\\
  4 & 4 & 4 & 6 & 3 & 3 & 3 & 4 & 6 & 6 & 6 & 10 & 4 & 4 & 4 & 6\\
  4 & 3 & 3 & 3 & 6 & 4 & 4 & 4 & 6 & 4 & 4 & 4 & 10 & 6 & 6 & 6\\
  3 & 4 & 3 & 3 & 4 & 6 & 4 & 4 & 4 & 6 & 4 & 4 & 6 & 10 & 6 & 6\\
  3 & 3 & 4 & 3 & 4 & 4 & 6 & 4 & 4 & 4 & 6 & 4 & 6 & 6 & 10 & 6\\
  3 & 3 & 3 & 4 & 4 & 4 & 4 & 6 & 4 & 4 & 4 & 6 & 6 & 6 & 6 & 10\\
\end{bmatrix}
\end{equation*}
Its eigenvalues, repeated according to multiplicities, are $77, 15, 15, 9, 9, 9, 5, 3, 3, 3, 3, 3, 3, 1, 1, 1$. There are 6 unique eigenvalues: $77, 15, 9, 5, 3, 1$, showing that the class of Hamming kernels is larger than the class of $\Phi$-kernels on~$\mathcal{G}$.
\end{example}

\subsection{Proof that \texttt{BODi} is not a (variation of a) heat kernel} \label{app:bodi}

As outlined in \Cref{def:hammingclass}: a kernel is a Hamming Kernel (\texttt{HK}) if and only if it depends on $\mathbf{x}, \mathbf{x}' \in \mathcal{X}$ through the square root of the Hamming distance $h(\mathbf{x}, \mathbf{x}')$. That is, 
\begin{equation*}
    k_{\texttt{HK}}(\mathbf{x}, \mathbf{x}') := f\left(\sqrt{h(\mathbf{x}, \mathbf{x}')}\right),   
\end{equation*}
for some $f : \mathbb{Z}_0^+ \rightarrow \mathbb{R}$.

In contrast, the \texttt{BODi} kernel depends on $\mathbf{x}, \mathbf{x}' \in \mathcal{X}$ through the term 
\begin{equation*}
    k_{\texttt{BODi}}(\mathbf{x}, \mathbf{x}') := g\left(\lVert\phi_{\mathbf{A}}(\mathbf{x})-\phi_{\mathbf{A}}(\mathbf{x}')\rVert^2_2\right),   
\end{equation*}
with 
\begin{equation*}
    g(d): =\left( 1 + \frac{\sqrt{5} d}{\ell} + \frac{5 d^2}{3 \ell^2} \right) \exp \left( -\frac{\sqrt{5} d}{\ell} \right).
\end{equation*}
Here, $g(\cdot)$ is the Matérn-5/2 function, where we note that $g(\cdot)$ constitutes a bijective function. Additionally, $\phi_{\mathbf{A}}(\cdot)$ denotes the mapping towards the \texttt{HED} embedding \cite[see Section 4 of ][]{deshwal2023bayesian}, namely:
\begin{equation*}
    \left[ \phi_\mathbf{A}(\mathbf{x}) \right]_i := h(\mathbf{a}_i, \mathbf{x}),   
\end{equation*}
with $\mathbf{a}_i \in \mathbf{A}$ one of the $M$ anchor points in dictionary $\mathbf{A}$.

For \texttt{BODi} to be a Hamming kernel, there thus needs to exist a function, say $u(\cdot)$, such that 
\begin{equation*}
    u\left(\sqrt{h(\mathbf{x}, \mathbf{x}')}\right) = \lVert\phi_{\mathbf{A}}(\mathbf{x})-\phi_{\mathbf{A}}(\mathbf{x}')\rVert^2_2.   
\end{equation*}
If this function $u(\cdot)$ exists, we can define $f:= g \circ u$, obtaining 
\begin{equation*}
    k_{\texttt{BODi}}(\mathbf{x}, \mathbf{x}') = g\left(\lVert\phi_{\mathbf{A}}(\mathbf{x})-\phi_{\mathbf{A}}(\mathbf{x}')\rVert^2_2\right) = g\left(u\left(\sqrt{h(\mathbf{x}, \mathbf{x}')}\right)\right) = f\left(\sqrt{h(\mathbf{x}, \mathbf{x}')}\right) = k_{\texttt{HK}}(\mathbf{x}, \mathbf{x}').
\end{equation*}

However, as becomes clear by looking at the mapping $\phi_{\mathbf{A}}(\cdot)$, there does not exist a function $u(\cdot)$ satisfying the above claim. We provide a small counter-example with $M=1$ below.

In scenario A, we have: $$\mathbf{x} = [1,1,1], \text{ } \mathbf{x'} = [1,1,2] \text{ and } \mathbf{a}_1 = [1,1,1],$$ which gives us $$\sqrt{h(\mathbf{x}, \mathbf{x}')} = 1 \text{ and } \lVert\phi_{\mathbf{A}}(\mathbf{x})-\phi_{\mathbf{A}}(\mathbf{x}')\rVert^2_2 = 1.$$

In scenario B, we have: $$\mathbf{x} = [1,1,2], \text{ } \mathbf{x'} = [1,1,3] \text{ and } \mathbf{a}_1 = [1,1,1],$$ which gives us $$\sqrt{h(\mathbf{x}, \mathbf{x}')} = 1 \text{ and } \lVert\phi_{\mathbf{A}}(\mathbf{x})-\phi_{\mathbf{A}}(\mathbf{x}')\rVert^2_2 = 0.$$

For $u(\cdot)$ to work in both scenarios, we would need to have: $$u(1) = 1 \text{ and } u(1) = 0,$$ which is clearly not possible. As a result, we conclude that \texttt{BODi} is not a Hamming kernel, and therefore also not a heat kernel (or a direct generalization thereof).

In fact, using the above proof structure, we can show an even stronger result: \texttt{BODi} is not part of an (even broader) general class of kernels, which includes not only Hamming- and graph-based kernels, but also additive kernels such as \texttt{CoCaBO} and \texttt{RDUCB} (see \Cref{app:additive}). With this, we feel confident in our answer that \texttt{BODi} cannot be seen as a heat-kernel variation in some sense, and its \texttt{HED} kernel is thus fundamentally different from the kernels grouped by our unifying framework.

\section{Ordinal variables in combinatorial Bayesian optimization} \label{app:ordinal}

Combinatorial Bayesian optimization is often used as umbrella term to denote a variety of discrete search spaces, resulting in potential misunderstandings. In this paper, we argue combinatorial BO is centered on three main types of variables: categorical, ordinal, and \say{discrete-quantitative}. \emph{Categorical} variables can take elements from a discrete, finite, and unordered set, where the distances between the different elements of the set are unknown. For \emph{ordinal} variables, the same definition applies, except the set is now ordered (but distances are still undefined). In contrast, \emph{discrete-quantitative} variables---which are often mistakenly referred to as ordinal variables---not only possess an order between the different elements, but also a specific distance. 

As a general example, we consider the number of layers in a neural network. Here, although the variable is discrete, there is a clear distance between the different categories (namely the Euclidean distance). Therefore, according to our definition, this is not an ordinal variable, but rather a discrete-quantitative variable. As can be imagined, truly ordinal variables are rare, and are in fact not considered in any of the kernels/methods that we compare against in this paper (to the best of our knowledge).

For discrete-quantitative variables, we believe the associated kernels are rather straightforward: use any (continuous) isotropic kernel, along with an appropriate distance function (usually Euclidean). As such, these kernels can simply be seen as continuous kernels, which is why we did not explicitly address them in our unifying framework. For ordinal variables (although rarely considered in the field), we suggest replacing the complete graphs $\mathcal{G}_i$ from \Cref{def:comb_graph} by path graphs, where nodes are only connected to the category right above or below in the ordered set $\mathcal{X}$. This approach is similar to what is proposed in \cite{oh2019combinatorial}, and allows us to connect ordinal variables with our unifying framework centered on heat kernels.

Unfortunately, the above connection is also where the analysis stops. Since path graphs do not contain the same type of symmetries as complete graphs, the eigenvalues of the corresponding Laplacian do not simplify as much as for categorical variables. Therefore, we do not obtain a simple closed-form solution. In other words, Equations~\ref{eq:heat}~or~\ref{eq:combo_kernel} cannot be simplified to something like Equations~\ref{eq:kondor_simple}~or~\ref{eq:cas_kernel}.

\section{Deriving heat kernels from general combinatorial kernels} \label{app:firstprinciples}

In Appendix~\ref{app:B1}, we define the most general combinatorial kernel and show that this leads to an explosion in degrees of freedom. In Appendix~\ref{app:B2}, we demonstrate that applying two simple assumptions significantly reduces the degrees of freedom associated with the kernel. In Appendix~\ref{app:B3}, we put both pieces together and arrive at Equation~\ref{eq:kondor_simple}, therefore deriving heat kernels in a clear and systematic way.

\subsection{Defining general combinatorial kernels} \label{app:B1}

For a finite input space $\mathcal{X}$, there are only a finite amount of pairs $(\mathbf{x},\mathbf{x'}) \in \mathcal{X} \times \mathcal{X}$ at which a kernel function $k(\cdot, \cdot)$ can be evaluated. As such, we can represent this function $k(\cdot, \cdot)$ as an $\left|\mathcal{X}\right| \times \left|\mathcal{X}\right|$ matrix $\mathbf{K}$, with rows and columns indexed by the elements of $\mathcal{X}$ and related to the kernel function by $K_{\mathbf{x}, \mathbf{x'}} = k(\mathbf{x}, \mathbf{x'})$ \citep{kondor2002diffusion}. From this, it becomes clear that defining a combinatorial kernel \textit{k} amounts to defining a symmetric positive semi-definite matrix $\mathbf{K}$.

To define the most general combinatorial kernel $k$, we can simply populate $\mathbf{K}$ with hyperparameters and update those by maximizing the marginal log likelihood. By doing so, any pair of points $(\mathbf{x},\mathbf{x'}) \in \mathcal{X} \times \mathcal{X}$ can be assigned any covariance $k(\mathbf{x}, \mathbf{x'})$, resulting in the most expressive kernel $k$. To ensure symmetry and positive semi-definiteness, we can parameterize Cholesky factor $\mathbf{C}$ of $\mathbf{K} = \mathbf{C} \mathbf{C}^{\top}$, resulting in $\left|\mathcal{X}\right| \left(\left|\mathcal{X}\right| + 1 \right) / 2$ hyperparameters.

Clearly, for most combinatorial input spaces of interest, this number of hyperparameters far exceeds the number of datapoints typically considered in BO tasks. To significantly reduce the number of hyperparameters, we can impose two simple assumptions on the parameterization of $\mathbf{K}$: product decomposition and compound symmetry.

\subsection{Reducing degrees of freedom using simple assumptions} \label{app:B2}
Instead of parameterizing $\mathbf{K}$ using the Cholesky decomposition, we can instead define $\mathbf{K}$ as being a Kronecker product of smaller matrices $\mathbf{K}_i$, similarly to how the standard (continuous) RBF kernel $k(\mathbf{x}, \mathbf{x'})$ can be seen as a product of one-dimensional kernels $k_i\left(x_i, x^{\prime}_i\right)$, each acting on its own variable.
\begin{assumption}[Product Decomposition]\label{as:product}
    Given a combinatorial input space $\mathcal{X} = \bigtimes_{i=1}^n \mathcal{X}_i$, the associated kernel matrix $\mathbf{K}$ can be written as 
    \[\mathbf{K} = \bigotimes_{i=1}^n \mathbf{K}_i,\]
    where $\bigotimes$ is the Kronecker product and $\mathbf{K}_i$ are symmetric positive semi-definite matrices of size $\left|\mathcal{X}_i\right| \times \left|\mathcal{X}_i\right|$.
\end{assumption}

By imposing Assumption~\ref{as:product}, we reduce the degrees of freedom from $\mathcal{O}\left(\left|\mathcal{X}\right|^2\right)$ down to $\mathcal{O}\left(\sum_{i=1}^{n}\left|\mathcal{X}_i\right|^2\right)$. If we see $\mathbf{x} \in \mathcal{X}$ as an $n$-dimensional vector of categorical variables, then Assumption~\ref{as:product} minimizes the explosion in the number of hyperparameters due to the increase in the number of variables, but not due to the increase in the number of categories per variable. For the latter, we can impose a certain structure on each $\mathbf{K}_i$.

\begin{assumption}[Compound Symmetry] \label{as:cs}
    For a kernel matrix $\mathbf{K}$ satisfying Assumption~\ref{as:product}, we have
    \begin{equation} \label{eq:comp_symm}\mathbf{K}_i\left[x, x' \right] = \left\{\begin{array}{ll}
    v_i & \text { if } x=x' \\
    c_i & \text { if } x \neq x'
    \end{array}, \quad c_i / v_i \in \left(-1 /\left(\left|\mathcal{X}_i\right|-1\right), 1\right) \thinspace ,\right.
    \end{equation}
    where $v_i$ and $c_i$ are hyperparameters.
\end{assumption}
Each matrix $\mathbf{K}_i$ satisfying Equation~\ref{eq:comp_symm} is positive semi-definite \citep{roustant2020group}.

By imposing Assumption~\ref{as:cs}, we obtain matrices $\mathbf{K}_i$ with only two different values: diagonals $v_i$, namely the variances, and off-diagonals $c_i$, namely the covariances. In essence, for each variable $x_i$, the covariances between the different categories are now considered equal, no matter which two categories are getting compared. By imposing both assumptions, we now have $2n$ hyperparameters. This number can be reduced further to $n+1$ hyperparameters, as will be shown in the next section.

\subsection{Deriving heat kernels} \label{app:B3}
In the previous two sections, we have represented $k(\cdot, \cdot)$ in terms of matrices, so as to easily represent the number of hyperparameters involved. In this section, we return to the function view and, after applying Assumptions~\ref{as:product}~and~\ref{as:cs}, we obtain
\[k(\mathbf{x}, \mathbf{x'}) = \prod_{i=1}^n k_i\left(x_i, x^{\prime}_i\right), \quad \text{with }
k_i\left(x_i, x_i' \right) = \left\{\begin{array}{ll}
v_i & \text { if } x_i=x_i' \\
c_i & \text { if } x_i \neq x_i'
\end{array}\right.\]
and $c_i/v_i \in \left(-1 /\left(\left|\mathcal{X}_i\right|-1\right), 1\right)$. Now, putting both assumptions together gives us
\begin{equation} \label{eq:civi}
    k(\mathbf{x}, \mathbf{x'}) = \prod_{i=1}^n k_i\left(x_i, x^{\prime}_i\right) = \prod_{i=1}^n v_i^{\delta \left(x_i, x^{\prime}_i\right)} c_i^{1 - \delta \left(x_i, x^{\prime}_i\right)} \propto \prod_{i=1}^n \left[\frac{c_i}{v_i}\right]^{1-\delta \left(x_i, x^{\prime}_i\right)} = \prod_{i=1}^n \rho_i^{1-\delta \left(x_i, x^{\prime}_i\right)},    
\end{equation}
where $\delta(\cdot,\cdot)$ is the Kronecker delta function and $\rho_i \in \left(-1 /\left(\left|\mathcal{X}_i\right|-1\right), 1\right)$.

From Equation~\ref{eq:civi}, it becomes clear that only $n+1$ hyperparameters are needed, where the additional hyperparameter comes from the fact that we now need to explicitly add a signal variance $\sigma^2$. Furthermore, choosing
\[\rho_i = \frac{1-e^{-\beta_i g_i}}{1+(g_i-1)e^{-\beta_i g_i}},\]
we get
\[k(\mathbf{x}, \mathbf{x'}) = \prod_{i=1}^n \rho_i^{1-\delta \left(x_i, x^{\prime}_i\right)} = \prod_{i=1}^n \left[\frac{1-e^{-\beta_i g_i}}{1+(g_i-1)e^{-\beta_i g_i}}\right]^{1-\delta\left(x_i, x_i^{\prime}\right)},\]
which is the ARD version of the heat kernel expressed in Equation~\ref{eq:kondor_simple}. Similar to Appendix~\ref{app:proofs}, $\beta_i \geq 0$ are hyperparameters and $g_i = \left|\mathcal{X}_i\right|$, leading to
\[\rho_i = f\left(\beta_i\right) = \frac{1-e^{-\beta_i g_i}}{1+(g_i-1)e^{-\beta_i g_i}} \in \left(-1 /\left(\left|\mathcal{X}_i\right|-1\right), 1\right),\]
therefore ensuring the base kernels $k\left(x_i, x_i'\right)$ are still positive semi-definite.

\section{Extending heat kernels: incorporating invariances and additive structures} \label{app:invexp}
In Section~\ref{sec:unifying}, we have presented a unifying framework describing general classes of combinatorial kernels. In this appendix, we show that these classes of kernels can easily be modified and extended to incorporate invariances (Section~\ref{app:inv}) and additive structure (Section~\ref{app:additive}).

\subsection{Incorporating (permutation) invariance} \label{app:inv}
In combinatorial BO, it is not uncommon for the objective function $f$ to possess certain symmetries, with permutation invariance perhaps being the most prevalent. When these symmetries are known, incorporating them into the kernel can significantly improve predictive accuracy \citep{borovitskiy2023isotropic}, and therefore also sample efficiency.

Formally, let $G$ be a group acting on $\mathcal{X}$, then a function $f: \mathcal{X} \to \mathbb{R}$ is called $G$-invariant if
\[f\left(g\left(\mathbf{x}\right)\right) = f(\mathbf{x}) \quad \forall g \in G \quad \forall \mathbf{x} \in \mathcal{X}.\]
Essentially, this means that $f$ remains unchanged under the action of any element of $G$ on $\mathcal{X}$. Now, for $f \sim \text{GP}\left(0,k\right)$, we have that $f$ is $G$-invariant if and only if $k$ is $G$-invariant, which we define as being
\[k\left(\mathbf{x},\mathbf{x'}\right) = k\left(g_1\left(\mathbf{x}\right),g_2\left(\mathbf{x'}\right)\right) \; \; \forall g_1,g_2 \in G \; \; \forall \mathbf{x},\mathbf{x'} \in \mathcal{X}.\]
We refer the reader to \citet{ginsbourger2013invariances} for a proof.

Given a finite group $G$, \citet{duvenaud2014automatic} propose three different methods to convert a kernel $k$ into its $G$-invariant version $k_{/G}$:
\begin{enumerate}
    \item summation
    \[k_{\text{sum}}(\mathbf{x}, \mathbf{x}') = \sum_{g \in G} \sum_{g' \in G} k(g(\mathbf{x}), g'(\mathbf{x}')),\]
    \item projection onto a fundamental domain
    \[k_{\text{proj}}(\mathbf{x}, \mathbf{x}') = k(A_G(\mathbf{x}), A_G(\mathbf{x}')),\]
    \item multiplication:
    \[k_{\text{prod}}(\mathbf{x}, \mathbf{x}') = \prod_{g \in G} \prod_{g' \in G} k(g(\mathbf{x}), g'(\mathbf{x}')).\]
\end{enumerate}
Here, $A_G$ is a mapping used to transform $\mathbf{x}$ into a canonical representation $A_G\left(\mathbf{x}\right)$ under $G$. For example, for the group $\mathrm{S}_n$ of permutations of a size $n$ set, \citet{duvenaud2014automatic} suggest the fundamental domain $\{x_1, \dots, x_n : x_1 \leq \dots \leq x_n\}$, which can easily be mapped to by sorting $\mathbf{x}$ in ascending order. For permutation invariance, this sorting method is often much faster to execute than summation or multiplication, making it the preferred method of choice.

\paragraph{Padding for permutation invariance} Although sorting $\mathbf{x}$ will indeed result in a permutation-invariant kernel, it does not necessarily lead to good distances between different points. For example, consider the following two vectors, before and after sorting: 
\begin{align*}
    \mathbf{x} &= \left[0,0,0,1,1,2,3,3,4,4\right] \rightarrow \left[0,0,0,1,1,2,3,3,4,4\right]\\
    \mathbf{x}' &= \left[4,4,0,1,1,2,3,3,4,4\right] \rightarrow \left[0,1,1,2,3,3,4,4,4,4\right].
\end{align*}
Here, we expect both vectors to have a relatively low Hamming distance, since only the first two variables have been changed between $\mathbf{x}$ and $\mathbf{x}'$. However, after sorting, both vectors look very different, and their Hamming distance is quite large (i.e.\ 7).

To overcome this problem, we propose a simple trick: \textit{padded sorting}. To do so, we introduce the new category \say{*}, and add it to our existing set of categories. In our previous example, all vectors in $\{0, \ldots, 4\}^{10}$ will then be mapped into $\{*, 0, \ldots, 4\}^{5*10}$ like this:
\begin{align*}
    \mathbf{x} = \left[0,0,0,1,1,2,3,3,4,4\right] &\rightarrow \left[0,0,0,*,*,*,*,*,*,*, 1,1,*, \; \cdots \;, *,4,4,*,*,*,*,*,*,*,*\right] \\
    \mathbf{x}' = \left[4,4,0,1,1,2,3,3,4,4\right] &\rightarrow \left[0,*,*,*,*,*,*,*,*,*, 1,1,*, \;\cdots \;,*,4,4,4,4,*,*,*,*,*,*\right].
\end{align*}
As a result, the Hamming distance (after padded sorting) is now $2+0+0+0+2=4$, which is much more reasonable. In fact, with padded sorting, the distance between two sequences is something very intuitive: for every category in $\{1, \ldots, 4\}$, we compute the difference between the number of entries of this category in the first vector and the number of entries of this category in the second vector, and then we sum these numbers up. Although this trick is relatively simple, it has never been introduced in the Bayesian optimization literature (to the best of our knowledge).

In Appendix~\ref{app:perminv_sfu}, we use synthetic permutation-invariant functions to demonstrate the power of permutation-invariant heat kernels, as well as the increase in performance by switching to padded sorting. In Appendix~\ref{app:naslib_experiments}, we showcase the potential of permutation-invariant heat kernels for solving real-world problems, with an example on neural architecture search.

\subsection{Incorporating additive structure} \label{app:additive}
Incorporating additive structure has proven successful for tackling high-dimensional problems or obtaining explainable models \citep{duvenaud2011additive,durrande2012additive,lu2022additive}, and may thus prove useful in a combinatorial context. In this section, using our derivation from Appendix~\ref{app:firstprinciples}, we extend the heat-kernel framework to such additive-based structures (in a straightforward way).

In Appendix~\ref{app:firstprinciples}, we started from the most general combinatorial kernel and applied two simple assumptions (Assumptions~\ref{as:product}~and~\ref{as:cs}), naturally obtaining heat kernels expressed as
\begin{equation} \label{eq:product_decomp}
    k(\mathbf{x}, \mathbf{x'}) = \prod_{i=1}^n k_i\left(x_i, x^{\prime}_i\right), 
\end{equation}
with
\begin{equation} \label{eq:base_kernels}
     k_i\left(x_i, x^{\prime}_i\right) = \left\{\begin{array}{ll}
    v_i & \text { if } x_i=x_i' \\
    c_i & \text { if } x_i \neq x_i'
    \end{array}, \quad \frac{c_i}{v_i} \in \left(\frac{-1}{g_i-1}, 1\right) \thinspace \right..
\end{equation}

Now, we relax Assumption~\ref{as:product}: we replace the product decomposition of Equation~\ref{eq:product_decomp} by a more general decomposition
\[k(\mathbf{x}, \mathbf{x'}) = f\left(k_1\left(x_1, x^{\prime}_1\right), \dotsc, k_n\left(x_n, x^{\prime}_n\right)\right).\]
Here, some care needs to be exerted in picking a decomposition $f$, such that the obtained kernel $k(\mathbf{x}, \mathbf{x'})$ remains valid (i.e.\ symmetric positive semi-definite). Using this general decomposition $f$, we obtain an expressive class of kernels, which spans many well-known combinatorial kernels (including all Hamming or $\Phi$-kernels from Section~\ref{sec:generalising}).

\paragraph{Connection to \texttt{CoCaBO} and \texttt{RDUCB}}
To extend the heat-kernel framework to additive structures, we keep the base kernels $k_i\left(x_i, x^{\prime}_i\right)$ from Equation~\ref{eq:base_kernels} and select $f$ to be an additive function. For example, if we choose $f$ to be a simple first-order sum, we obtain
\begin{equation*}
    k(\mathbf{x}, \mathbf{x'}) = \sum_{i=1}^n k_i\left(x_i, x^{\prime}_i\right),
\end{equation*}
which is exactly the (additive) kernel used in \texttt{CoCaBO} \citep{ru2020bayesian}. Similarly, to obtain \texttt{RDUCB}'s random-decomposition kernel, we select $f$ to be 
\begin{equation*}
    k(\mathbf{x}, \mathbf{x'}) = \sum_{c \in C} \prod_{i \in c} k_i\left(x_i, x^{\prime}_i\right),
\end{equation*}
with $c \subseteq \{1, \ldots, n\}$ and $C$ a (random) collection of different components $c$. Here, if two dimensions $i$ and $j$ do not appear together in any of the sets $c$, the kernel will not model interactions between them \citep{ziomek2023random}. By choosing other (additive) functions for $f$, we have a principled way of coming up with new combinatorial kernels. We give an example in the following paragraph.

\paragraph{New explainable kernel}
In the continuous Gaussian-process literature, there exist a number of additive-based kernels with an \say{explainable} or \say{interpretable} character. With the above framework, it becomes easy to extend them to combinatorial input spaces: keep the combinatorial base-kernels from Equation~\ref{eq:base_kernels}, and select the same additive decomposition $f$ as the original (continuous) kernel. For example, to adapt the explainable (additive) kernel from \citet{duvenaud2011additive}, we choose
\begin{equation*}
    k(\mathbf{x}, \mathbf{x'}) = \sum_{d=1}^n \sigma_d \sum_{|c|=d} \prod_{i \in c} k_i\left(x_i, x^{\prime}_i\right),
\end{equation*}
with $c \subseteq \{1, \ldots, n\}$. Here, the $\sigma_d$ are hyperparameters that will indicate how strong the effect is of different degrees of interactions, thus giving the kernel an \say{explainable} character. Our principled approach is general and can therefore be used to extend many other (continuous) explainable kernels to combinatorial input spaces. One promising candidate is the Orthogonal Additive Kernel (\texttt{OAK}) from \citet{lu2022additive}, which presents an improved version of the above kernel from \citet{duvenaud2011additive}. Although an interesting direction for further research, experimenting with these new kernels falls outside the scope of this paper.

\section{\texttt{MCBO} experiments} \label{app:mcbo_experiments}
\subsection{Overview of benchmark problems} \label{app:mcbo_datasets}

\paragraph{SFU functions (20D)} The \texttt{MCBO} benchmark \citep{dreczkowski2024framework} contains 21 test functions from Simon Fraser University (\texttt{SFU}), which have been generalized to $d$-dimensional domains \citep{surjanovic2013}. Moreover, in \texttt{MCBO}, these continuous functions have been discretized by only allowing the model to evaluate points on a regular grid (with $n$ values per dimension). These functions cover a range of optimization difficulties such as steep ridges, many local minima, valley-shape functions, and bowl-shaped functions. Out of these 21 \texttt{SFU} test functions, we select 5 permutation-invariant functions with different properties, and choose $d = 20$ and $n = 11$ as in  \citet{dreczkowski2024framework}. Therefore, the benchmarks feature 20 categorical variables with 11 values each.

\paragraph{Pest Control (25D)} The \texttt{Pest} \texttt{Control} benchmark features 25 categorical variables with 5 values each, representing a chain of pest control stations \citep{oh2019combinatorial}. It models a dynamic optimization problem where pesticide prices decrease with frequent purchases, while effectiveness diminishes with repeated use due to pest tolerance. The objective is to minimize both pesticide costs and crop damage across the control chain. \texttt{Pest} \texttt{Control} has been included as benchmark in \texttt{COMBO} \citep{oh2019combinatorial}, \texttt{CASMOPOLITAN} \citep{wan2021think}, \texttt{BODi} \citep{deshwal2023bayesian}, \texttt{Bounce} \citep{papenmeier2023bounce} and \texttt{MCBO} \citep{dreczkowski2024framework}.

\paragraph{Contamination Control (25D)} The \texttt{Contamination} \texttt{Control} benchmark features 25 binary variables, representing whether to quarantine uncontaminated food products \citep{hu2010contamination}. Similar to \texttt{Pest} \texttt{Control}, it features sequential decision-making with dynamic interactions, but represents a slightly simpler problem due to the use of binary rather than categorical variables. The objective function balances minimizing both the number of contaminated products reaching consumers and the costs associated with preventive quarantine measures. \texttt{Contamination} \texttt{Control} has been included as benchmark in \texttt{COMBO} \citep{oh2019combinatorial}, \texttt{CASMOPOLITAN} \citep{wan2021think} and \texttt{Bounce} \citep{papenmeier2023bounce}.

\paragraph{LABS (50D)} The Low-Autocorrelation Binary Sequences (\texttt{LABS}) benchmark features 50 binary variables, representing a sequence $S$ of length 50 with elements in $\{-1, 1\}$. The objective is to find a sequence that minimizes the energy function $E(S) = \sum_{k=1}^{n-1} C_k^2(S)$, where $C_k(S)$ represents the autocorrelation at shift $k$. Alternatively, the objective can be expressed as maximizing the merit factor $F(S) = \frac{n^2}{2E(S)}$. This benchmark has practical applications in diverse fields, including communication engineering, radar and sonar ranging, satellite applications, and digital signal processing \citep{packebusch2016low}. \texttt{LABS} has been included as benchmark in \texttt{BODi} \citep{deshwal2023bayesian} and \texttt{Bounce} \citep{papenmeier2023bounce}.

\paragraph{MaxSAT (60D)} The \texttt{MaxSAT} benchmark features 60 binary variables, representing the booleans in a weighted Maximum SATisfiability (\texttt{MaxSAT)} problem \citep{bacchus2018maxsat}. The objective is to assign the boolean values such that the sum of the weighted clauses is maximized. Following \texttt{Bounce} \citep{papenmeier2023bounce}, we normalize these weights to have zero mean and unit standard deviation. \texttt{MaxSAT} has been included as benchmark in \texttt{COMBO} \citep{oh2019combinatorial}, \texttt{CASMOPOLITAN} \citep{wan2021think}, \texttt{BODi} \citep{deshwal2023bayesian} and \texttt{Bounce} \citep{papenmeier2023bounce}.

\paragraph{Cluster Expansion (125D)} The \texttt{Cluster} \texttt{Expansion} benchmark features 125 binary variables, and is derived from a real-world MaxSAT instance in materials science \citep{bacchus2018maxsat}. As with the \texttt{MaxSAT} benchmark, the objective is to maximize the total weight of satisfied clauses, while treating the problem as a black box. To maintain consistency with \texttt{MaxSAT}, we also normalize the weights for \texttt{Cluster} \texttt{Expansion}, whereas this is not done in \texttt{Bounce} \citep{papenmeier2023bounce}. To the best of our knowledge, \texttt{Cluster} \texttt{Expansion} was introduced as a new combinatorial BO benchmark in \texttt{Bounce} \citep{papenmeier2023bounce}.

\subsection{Implementation details} \label{app:mcbo_impl}
The \texttt{MCBO} benchmark, introduced in \citet{dreczkowski2024framework}, consists of a collection of problems and algorithms for combinatorial BO. The benchmark is open-source (with MIT license), and allows us to perform all the experiments of Section~\ref{sec:experiments} and Appendix~\ref{app:mcbo_experiments} with minimal changes. The original benchmark is available at \url{https://github.com/huawei-noah/HEBO/tree/master/MCBO}, and our modified version can be found in the supplementary material. To prevent bias or overfitting, our experimental set-up relies as much as possible on the default values present in \texttt{MCBO}. We highlight some key elements of the configuration below.

\paragraph{Experimental set-up}
We use 20 initialization points, opt for a batch size of 1 and allow up to 200 iterations. As hyperparameter optimizer and acquisition function, we use the ubiquitous Adam optimizer \citep{KingBa15} and Expected Improvement (EI) \citep{movckus1975bayesian, jones1998efficient}, respectively. To optimize the acquisition function in the pipelines of \Cref{fig:main_kernels,fig:app_kernels,fig:mcbo_kernels,fig:perm_inv_kernels}, we use a genetic algorithm and include a trust region (see Appendices~F~and~G of \citet{dreczkowski2024framework} for a detailed description). No priors were imposed on the hyperparameters, and therefore we use MLE and not MAP inference. We standardize all black-box function values using the mean and standard deviation of all previously observed values, and transform it back at prediction. The experiments are repeated across 20 random seeds, except for \texttt{Contamination} \texttt{Control} and \texttt{LABS}, which are noisier and therefore require double the amount of runs (i.e.\ 40 random seeds). Similarly, the biological and logic-synthesis datasets from \Cref{fig:mcbo_kernels,fig:app_methods} require triple the amount of runs (i.e.\ 60 random seeds). Moreover, due to its significant computational requirements, we have halved the number of seeds for pipelines involving the \texttt{SSK}, as well as stopped their runs at 100 iterations on \texttt{Cluster} \texttt{Expansion}. The \texttt{MCBO} plots in this paper display the mean and standard error of the mean (with respect to the seeds) as a solid/dotted line and shaded area, respectively.

Out of all our baselines in Section~\ref{sec:experiments} and Appendix~\ref{app:mcbo_experiments}, \texttt{Bounce} \citep{dreczkowski2024framework} is the only one which is not implemented in \texttt{MCBO}. Therefore, we instead rely on the authors' open-source implementation (with MIT license), available at \url{https://github.com/LeoIV/bounce}. Because \texttt{Bounce} first explores lower-dimensional subspaces, we adhere to the authors' recommended set-up of 5 initialization points. This is visible on Figures~\ref{fig:main_methods}~and~\ref{fig:app_methods}, where \texttt{Bounce} starts with lower objective-values than the other baselines (at around 20 iterations).

\paragraph{Relocation of the optima} Following the approach in \texttt{Bounce} \citep{papenmeier2023bounce}, we relocate the optima (indicated by [reloc.]) using a fixed randomization procedure for each benchmark for all
algorithms and repetitions. For binary problems, we flip each input variable independently with
probability 0.5. For categorical problems, we randomly permute the order of the categories. 

\paragraph{Computational resources} We were able to run all evaluated baselines on single-core CPUs with 1GB of RAM. The only notable exception is \texttt{SSK} (or \texttt{BOSS}), which is prohibitively slow on CPUs and was run on NVIDIA RTX6000 GPUs with 24GB of RAM. Table~\ref{tab:wall_clock} gives an overview of the amount of time needed to run each method.

\subsection{Additional synthetic tasks} \label{app:cont_maxsat}

\begin{figure}[ht]
    \centering
    \includegraphics[width=0.85\columnwidth]{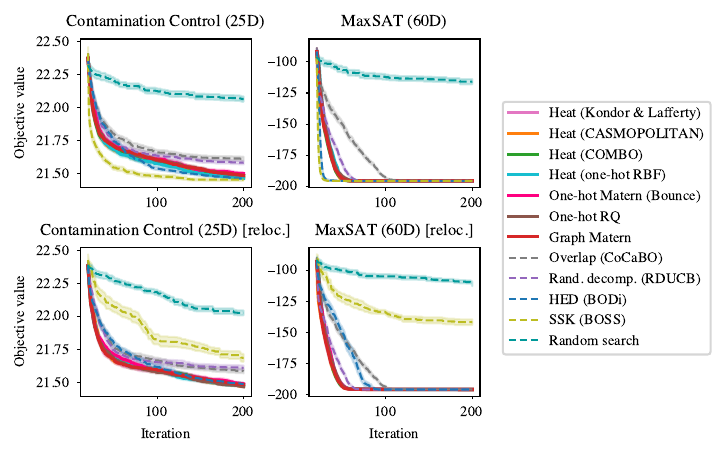}
    \caption{Extension of \Cref{fig:main_kernels} (when keeping the pipeline fixed and varying only the kernel choice, our unifying framework becomes visible empirically).}
    \label{fig:app_kernels}
\end{figure}

Figures~\ref{fig:app_kernels}~and~\ref{fig:app_methods} extend the results of Figures~\ref{fig:main_kernels}~and~\ref{fig:main_methods}, respectively, to two new problems: \texttt{Contamination} \texttt{Control} and \texttt{MaxSAT}. These extended results fit the three takeaways presented in Section~\ref{sec:experiments}:
\begin{enumerate}
    \item all heat (and even Hamming) kernels obtain nearly indistinguishable performance,
    \item \texttt{BODi} (\texttt{HED}) and \texttt{BOSS} (\texttt{SSK}) experience a significant drop in performance after relocation of the optima,
    \item a simple and fast pipeline, relying on heat kernels, achieves state-of-the-art performance.
\end{enumerate}

\begin{figure}[ht]
    \centering
    \includegraphics[width=0.85\columnwidth]{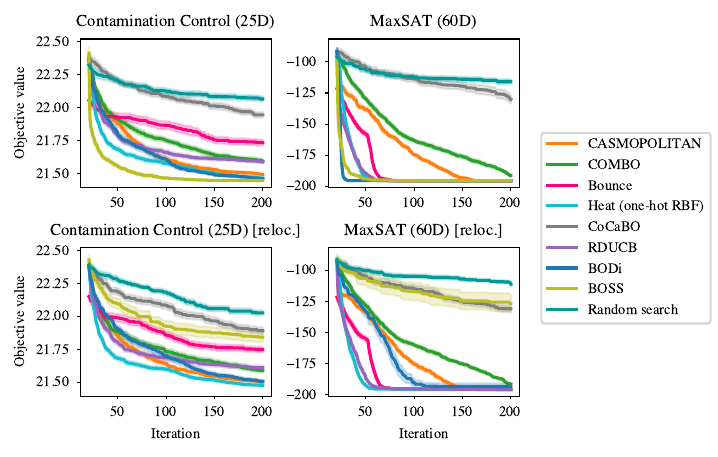}
    \caption{Extension of \Cref{fig:main_methods} (a fast and simple pipeline, relying on heat kernels, achieves state-of-the-art results).}
    \label{fig:app_methods}
\end{figure}

\subsection{Biological and logic-synthesis tasks} \label{app:biology}
Figures~\ref{fig:mcbo_kernels}~and~\ref{fig:mcbo_methods} extend the results of Figures~\ref{fig:main_kernels}~and~\ref{fig:main_methods}, respectively, to four new problems: \texttt{Antibody}~\texttt{Design}, \texttt{AIG/MIG}~\texttt{Optimization}, and \texttt{RNA}~\texttt{Inverse}~\texttt{Folding}. Here, the only exception is \texttt{RDUCB}, which was not evaluated on \texttt{Antibody}~\texttt{Design} because its implementation (by the original authors) does not support input constraints (which are needed for this task). We refer to \cite{dreczkowski2024framework} for a detailed description of all four problems.

These extended results on four new datasets mostly fit the three takeaways presented in Section~\ref{sec:experiments}, as well as lead to new insights:
\begin{enumerate}
    \item On all four datasets, all Hamming kernels---indicated by solid lines in \Cref{fig:mcbo_kernels}---obtain nearly indistinguishable performance (as is the case for the other datasets presented in the paper).
    \item On all four datasets, \texttt{SSK} (\texttt{BOSS}) and \texttt{HED} (\texttt{BODi}) do not seem particularly sensitive to relocation of the optima (a similar behavior was observed on \texttt{LABS} in \Cref{fig:main_kernels,fig:main_methods}).
    \item On all four datasets, our fast and simple heat-kernel pipeline achieves identical performance to \texttt{Bounce}, which is currently the state-of-the-art method. Interestingly, although both methods (\texttt{Bounce} and ours) are among the top performers on \texttt{AIG/MIG}~\texttt{Optimization}, this is not the case on \texttt{Antibody}~\texttt{Design} and \texttt{RNA}~\texttt{Inverse}~\texttt{Folding}.
    
    On these two datasets, there is a new trend: additive kernels, namely \texttt{Overlap} (\texttt{CoCaBO}) and \texttt{Rand.\ decomp.} (\texttt{RDUCB}), are able to outperform Hamming kernels for the first time. This is not necessarily surprising: in related biological tasks, there exists evidence that first- and second-order interactions lead to strong predictive performance \citep{domingo2018pairwise, faure2024genetic}. The \texttt{MCBO} paper \citep{dreczkowski2024framework} also contains potential evidence for this low-order hypothesis: a genetic algorithm, which only mutates a few variables at a time and does not inherently model high-order interactions, performs remarkably well on \texttt{RNA}~\texttt{Inverse}~\texttt{Folding}, significantly outperforming all BO methods.
    
    If such additive structure is known a priori, one can easily incorporate it into the heat-kernel, leading to methods such as (but not limited to) \texttt{RDUCB} and \texttt{CoCaBO} (see \Cref{app:additive}). Nevertheless, besides the kernel, the acquisition-function optimizer seems to play an important role for these two biological datasets: \texttt{CASMOPOLITAN} (simple heat kernel) outperforms \texttt{CoCaBO} (additive-based heat kernel), although their respective kernels with a fixed acquisition-function optimizer lead to do the opposite ranking.
\end{enumerate}

\begin{figure}[ht]
    \centering
    \includegraphics[width=\columnwidth]{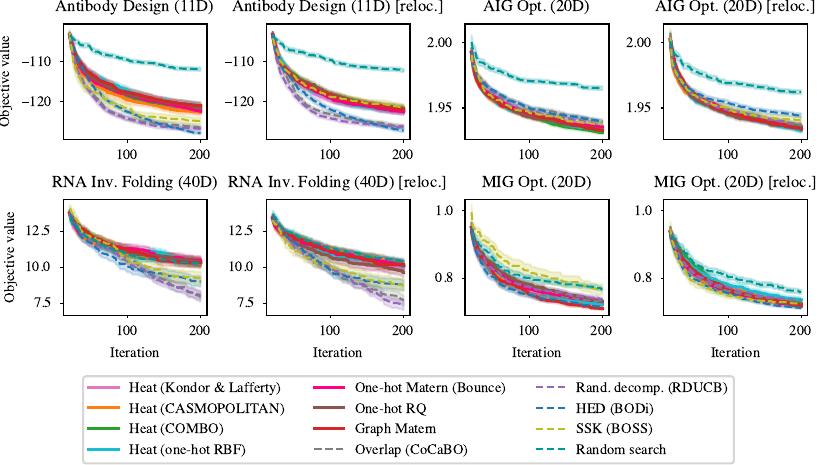}
    \caption{Extension of \Cref{fig:main_kernels} (when keeping the pipeline fixed and varying only the kernel choice, our unifying framework becomes visible empirically).}
    \label{fig:mcbo_kernels}
\end{figure}

\begin{figure}[ht]
    \centering
    \includegraphics[width=\columnwidth]{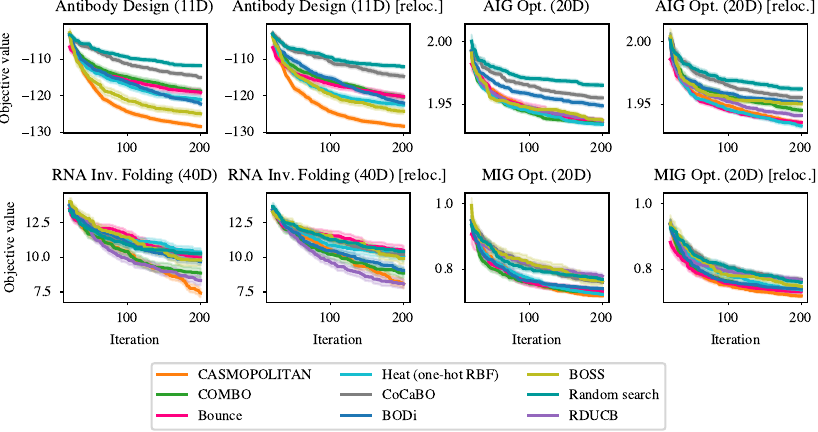}
    \caption{Extension of \Cref{fig:main_methods} (a fast and simple pipeline, relying on heat kernels, matches the state-of-the-art method \texttt{Bounce}).}
    \label{fig:mcbo_methods}
\end{figure}

\subsection{Permutation-invariant \texttt{SFU} functions} \label{app:perminv_sfu}

In Section~\ref{sec:sensitivity}, we discuss the \texttt{SSK}'s (or \texttt{BOSS}') sensitivity to the location of the optima and present a simple hypothesis: the \texttt{SSK} favors points with repeated categorical values. Although a complete theoretical analysis of the mechanisms behind the \texttt{SSK}'s bias lies outside the scope of this paper, we do present an intuitive example for our hypothesis. Consider a substring $\mathbf{s}_1 = [1,1]$ compared to the string $\mathbf{x}_1 = [1,1,1,1]$, and another substring $\mathbf{s}_2 = [1,2]$ compared to the string $\mathbf{x}_2 = [2,1,2,1]$. Here, both substrings contain similar categorical values as their corresponding strings, but the $\texttt{SSK}$ will generate six matches between $\mathbf{s}_1$ and $\mathbf{x}_1$, and only one match between $\mathbf{s}_2$ and $\mathbf{x}_2$. Therefore, we hypothesize that the $\texttt{SSK}$ will have a biased perception of distance when it comes to points with repeated categorical values, due to these points accumulating a disproportionate amount of matches. 

\begin{figure}[ht]
    \centering
    \includegraphics[width=\columnwidth]{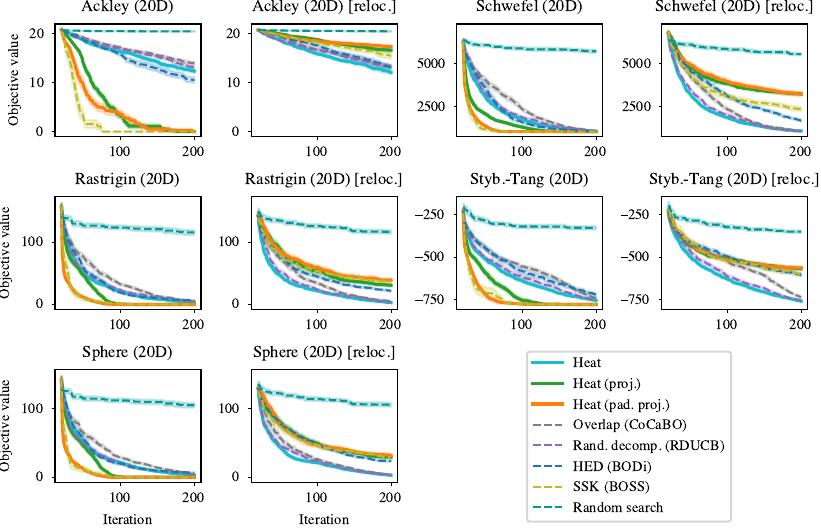}
    \caption{Our padded version of $k_{\text{proj}}(\mathbf{x}, \mathbf{x}')$, proposed in Appendix~\ref{app:inv}, outperforms the non-padded version from \citet{duvenaud2014automatic}, as well as matches the \texttt{SSK} on nearly all problems.}
    \label{fig:perm_inv_kernels}
\end{figure}

To empirically analyze this theoretical hypothesis, we rely on synthetic permutation-invariant functions, for which the global optima consist of repeated categorical values. As expected, Figure~\ref{fig:perm_inv_kernels} displays a clear degradation of performance for \texttt{SSK} after relocation of the optima. This result is consistent with our hypothesis, since the random relocation means the optima will not necessarily consist of repeated categorical values.

Additionally, if we know the problem is permutation-invariant, we can instead rely on the techniques described in Appendix~\ref{app:inv}. In Figure~\ref{fig:perm_inv_kernels}, \texttt{Heat} \texttt{(proj.)} refers to $k_{\text{proj}}(\mathbf{x}, \mathbf{x}')$ from \citet{duvenaud2014automatic}, which in this case amounts only to sorting the data before uing a heat kernel. As such, \texttt{Heat}~\texttt{(pad.}~\texttt{proj.)} represents our \say{padded sorting} approach proposed in Appendix~\ref{app:inv}. From Figure~\ref{fig:perm_inv_kernels}, we conclude that padded sorting is a significant improvement over regular sorting. Moreover, even when the objective functions are permutation-invariant, there is no need to use the computationally intensive \texttt{SSK}, since \texttt{Heat}~\texttt{(pad.}~\texttt{proj.)} achieves similar performance on all five functions (except maybe for \texttt{Ackley} \texttt{Function}). 

\paragraph{Including ``strategic'' features} When the objective function is invariant to groups other than the permutation group, the general techniques from \Cref{app:inv} can be used, whereas the \texttt{SSK} will not work. However, depending on the invariance group, these techniques can sometimes be computationally expensive. A more lightweight alternative, though perhaps less principled, is to include additional \say{strategic} features. For example, in the \texttt{Bounce} paper \citep{papenmeier2023bounce}, the authors argue that \texttt{BODi} achieves its performance mostly due to its way of doing \say{uniform} sampling of anchor points, which turns out to be heavily skewed towards points with a certain type of structure. As a consequence, \texttt{BODi}'s \texttt{HED} embeddings or features  contain a clear distance metric to anchor points with a certain structure, which can become a useful signal to navigate towards certain optima. The same idea can easily be integrated within our heat-kernel framework: sample anchor points according to the desired type of structure, and add the relevant features (i.e.\ the Hamming distance to these anchor points) to the existing heat-kernel. We believe this general idea might be an interesting direction for further research, but leave it to future works.

\subsection{Wall-clock time} \label{app:wall_clock}
Table~\ref{tab:wall_clock} displays the wall-clock time (in seconds) of different pipelines. This speed measurement is concerned only with the algorithms themselves, and therefore does not include the time needed for evaluation of the objective function. All baselines are measured on a single-core CPU, except for \texttt{BOSS}, whose speed measurements should be interpreted with caution given that they are based on highly-vectorized code run on a GPU. A clear pattern is visible for nearly all methods across the different benchmarks: as the dimensionality of the problem increases, so does the wall-clock time. Suprisingly, this is not the case for \texttt{Bounce}, where the wall-clock time stays roughly constant across varying dimensionalities. We hypothesize that this is due to \texttt{Bounce}'s nested embeddings, which significantly reduce the dimensionality of the problem for most of the iterations.  

Figure~\ref{fig:bars} summarizes the results from Table~\ref{tab:wall_clock} as follows: divide the wall-clock time of each method by that of the fastest method (i.e.\ \texttt{CoCaBO}), and then take the average of these ratios across all benchmarks. Using this summary, we can clearly observe the benefits of our faster implementation from Section~\ref{sec:faster_impl}: \texttt{one-hot} \texttt{RBF} is more than 2$\times$ faster than \texttt{COMBO}'s kernel (on average across all five problems). Additionally, the simple heat-kernel pipeline proposed in Section~\ref{sec:fs_pipeline} is also fast, with only minimal overhead compared to \texttt{CoCaBO} or \texttt{Bounce}.

\begin{table}[ht]
    \caption{The wall-clock time (in seconds) increases together with the dimensionality of the problem.}
    \vspace{\baselineskip}
    \label{tab:wall_clock}
    \centering
    \setlength{\tabcolsep}{4.5pt}
    \scriptsize
    \begin{tabular}{lrrrrr}
        \toprule
        \textbf{Method} & \textbf{Pest Control} & \textbf{Contam. Control} & \textbf{LABS} & \textbf{MaxSAT} & \textbf{Cluster Exp.} \\
        & \textbf{(25D)} & \textbf{(25D)} & \textbf{(50D)} & \textbf{(60D)} & \textbf{(125D)} \\
        \midrule
        \texttt{CoCaBO} & 227.40 $\pm$ 8.32 & 233.18 $\pm$ 5.61 & 369.27 $\pm$ 6.56 & 435.30 $\pm$ 8.88 & 857.51 $\pm$ 15.44 \\
        \addlinespace
        \texttt{Bounce} & 548.42 $\pm$ 13.57 & 448.51 $\pm$ 6.56 & 496.14 $\pm$ 12.97 & 434.04 $\pm$ 11.57 & 471.77 $\pm$ 9.25 \\
        \addlinespace
        \texttt{Heat} \texttt{(one-hot} \texttt{RBF)} & 403.97 $\pm$ 9.42 & 375.80 $\pm$ 6.54 & 565.01 $\pm$ 5.92 & 610.68 $\pm$ 9.20 & 1082.23 $\pm$ 15.71 \\
        \addlinespace
        \texttt{RDUCB} & 413.60 $\pm$ 3.04 & 394.50 $\pm$ 4.15 & 659.97 $\pm$ 8.96 & 812.95 $\pm$ 7.47 & 1593.04 $\pm$ 15.98 \\
        \addlinespace
        \texttt{CASMOPOLITAN} & 534.19 $\pm$ 6.16 & 521.41 $\pm$ 10.50 & 727.54 $\pm$ 13.60 & 793.97 $\pm$ 21.80 & 1298.76 $\pm$ 79.95 \\
        \addlinespace
        \texttt{BODi} & 615.64 $\pm$ 9.53 & 577.63 $\pm$ 12.40 & 733.76 $\pm$ 7.12 & 791.20 $\pm$ 12.76 & 1131.67 $\pm$ 24.87 \\
        \addlinespace
        \texttt{Heat} \texttt{(COMBO)} & 764.70 $\pm$ 9.06 & 691.53 $\pm$ 11.67 & 1173.37 $\pm$ 18.92 & 1367.53 $\pm$ 13.52 & 2672.81 $\pm$ 25.69 \\
        \addlinespace
        \texttt{BOSS} & 1782.29 $\pm$ 13.33 & 4662.90 $\pm$ 1648.68 & 5376.22 $\pm$ 57.55 & 7370.48 $\pm$ 98.19 & $/$ \\ 
        \addlinespace
        \texttt{COMBO} & 6163.21 $\pm$ 144.18 & 4522.70 $\pm$ 66.74 & 11982.25 $\pm$ 195.49 & 19359.63 $\pm$ 766.23 & 49606.75 $\pm$ 1630.83 \\
        \bottomrule
    \end{tabular}
\end{table}

\section{\texttt{NASLib} experiments} \label{app:naslib_experiments}
To highlight the power of group-invariant heat kernels, we provide a simple example on a real-world permutation-invariant problem: Neural Architecture Search (NAS). To the best of our knowledge, in the subfield of Bayesian optimization for NAS, the current state-of-the-art method is \citet{ru2020interpretable}, which uses a Gaussian process with the Weisfeiler--Lehman (\texttt{WL}) kernel from \citet{shervashidze2011weisfeiler}. In this appendix, we benchmark a (permutation-invariant version of) heat kernels against the \texttt{WL} graph kernel, following the set-up of \citet{ru2020interpretable} using the \texttt{NASLib} package \citep{mehta2022bench}.

\subsection{Overview of benchmark problems} \label{app:naslib_datasets}

\paragraph{NAS-Bench-101} \citet{ying2019bench} introduce \texttt{NAS-Bench-101}, a tabular dataset containing 423,624 unique convolutional neural network architectures exhaustively generated from a fixed graph-based search space and evaluated on CIFAR-10. Each architecture is represented as a directed acyclic graph with up to 9 vertices and 7 edges, where operations include 3×3 convolution, 1×1 convolution, and 3×3 max-pooling. The dataset provides over 5 million trained models, with each architecture trained multiple times at various training budgets (4, 12, 36, and 108 epochs) and evaluated three times each, reporting metrics including training/validation/test accuracy, number of parameters, and training time.

\paragraph{NAS-Bench-201} \citet{dong2020nasbench201} introduce \texttt{NAS-Bench-201}, which extends the scope of reproducible NAS research by providing a cell-based search space with 15,625 unique architectures evaluated across three datasets: CIFAR-10, CIFAR-100, and ImageNet-16-120. Each architecture is represented as a densely-connected directed acyclic graph with 4 nodes, where edges are selected from 5 operations: none, skip connection, 1×1 convolution, 3×3 convolution, and 3×3 average pooling. Unlike NAS-Bench-101, this benchmark is applicable to almost any up-to-date NAS algorithm, including parameter sharing and differentiable methods, and provides fine-grained diagnostic information such as epoch-by-epoch training/validation/test loss and accuracy.

\subsection{Implementation details} \label{app:naslib_impl}
The \texttt{NASLib} benchmark, introduced in \citet{mehta2022bench}, consists of a collection of problems and algorithms for neural architecture search. The benchmark is open-source (with Apache 2.0 license), and allows us to perform all the experiments of Appendix~\ref{app:naslib_experiments} with minimal changes. The original benchmark is available at \url{https://github.com/automl/NASLib}, and our modified version can be found in the supplementary material. To prevent bias or overfitting, our experimental set-up relies as much as possible on the default values present in \texttt{NASLib} and \citet{ru2020interpretable}. We highlight some key elements of the configuration below.

\paragraph{Experimental set-up}
We use 10 initialization points, opt for a batch size of 5 and allow up to 150 iterations. As hyperparameter optimizer and acquisition function, we use the ubiquitous Adam optimizer \citep{KingBa15} and Expected Improvement (EI) \citep{movckus1975bayesian, jones1998efficient}, respectively. We transform all black-box function values (i.e.\ the test errors) by converting them to log-scale and standardizing them using the mean and standard deviation of all previously observed values. As above, we transform these values back at prediction. All experiments are repeated across 20 random seeds, and all \texttt{NASLib} plots in this paper display the median and standard errors (with respect to the seeds) as a solid line and shaded area, respectively. To optimize the acquisition function, we use a pool size of 200, where one half is generated from random sampling and the other half is generated from mutating the top-10 best performing architectures already queried (see Appendix~F.2 of \citet{ru2020interpretable} for a detailed description). No priors were imposed on the hyperparameters, and therefore we use MLE and not MAP inference.

\paragraph{Permutation-invariance for graphs}
In order to apply the heat kernel to the labeled graphs of varying nodes found in \texttt{NAS-Bench-101/201}, we transform the input space following \citet{borovitskiy2023isotropic}. More specifically, to deal with the varying sizes, we pad all graphs with additional disconnected nodes, such that all graphs now have the same order as the graph in the dataset with the maximal number of nodes. Additionally, to deal with the labels of the graph, we assign certain types of nodes to pre-specified groups of vertices, thus aligning different data samples better \citep[see Figure~5 of][]{borovitskiy2023isotropic}. Lastly, to account for graph automorphisms, we use the group-invariant kernel $k_{\text{sum}}(\mathbf{x}, \mathbf{x}')$ from Section~\ref{app:inv} and approximate the large sum as in \citet{borovitskiy2023isotropic}, namely using Monte Carlo sampling: 
\begin{align*}
    k_{\textrm{sum}}(\mathbf{x}, \mathbf{x'}) \approx \frac{1}{|S|^2} \sum_{\sigma_1 \in S} \sum_{\sigma_2 \in S} k\left(\sigma_1(\mathbf{x}), \sigma_2(\mathbf{x'})\right),
    &&
    S \subseteq \textrm{S}_n,
    &&
    S \ni \sigma \stackrel{\text { i.i.d. }}{\sim} \mathrm{U}(\textrm{S}_n),
\end{align*}
where $\textrm{S}_n$ denotes the set of all permutations and $\mathrm{U}(\textrm{S}_n)$ is the uniform distribution over $\textrm{S}_n$. For all \texttt{NASLib} experiments in this paper, we have used $|S| = 200$. 

\subsection{Regression problems and Bayesian optimization} \label{app:naslib_reg_bo}

As done in \citet{ru2020interpretable}, we analyze the regression and Bayesian-optimization performance of different kernels on \texttt{NASLib}. For the regression problems, we use different training sizes, ranging from 25 to 200 datapoints, and a test size of 200 datapoints. Here, all datapoints are sampled uniformly at random. Again, similarly to \citet{ru2020interpretable}, we choose the Spearman’s rank correlation between predicted and true accuracy as the performance metric, as what matters for selecting architectures in BO loops is their relative, and not absolute, performance. For Bayesian optimization, we provide a more detailed explanation of our set-up in Appendix~\ref{app:naslib_impl}.

\begin{figure}[ht]
    \centering
    \includegraphics[width=.85\columnwidth]{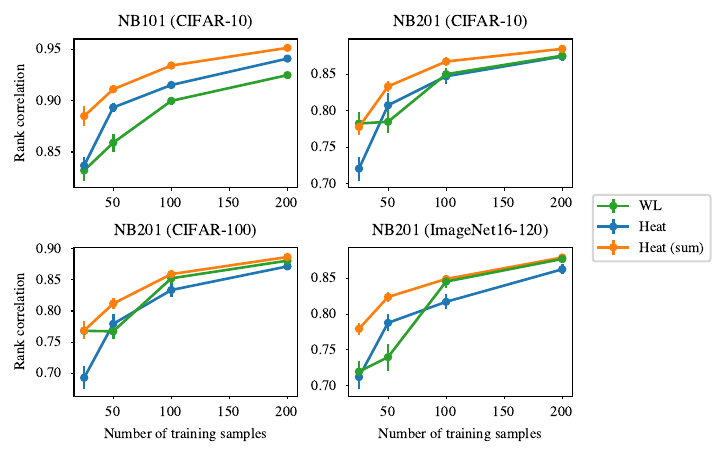}
    \caption{For regression problems on the \texttt{NASLib} benchmark, the \texttt{Heat} kernel performs comparably to the state-of-the-art \texttt{WL} kernel. An averaged version of this \texttt{Heat} kernel, indicated by \texttt{Heat}~\texttt{(sum)}, outperforms the \texttt{WL} kernel.}
    \label{fig:nas_regression}
\end{figure}

In Figures~\ref{fig:nas_regression}~and~\ref{fig:nas_bo}, we compare three kernels: the state-of-the-art Weisfeiler--Lehman (\texttt{WL}) kernel, the \say{standard} heat kernel (\texttt{Heat}), and its permutation-invariant version $k_{\textrm{sum}}(\mathbf{x}, \mathbf{x'})$, indicated by \texttt{Heat}~\texttt{(sum)}. For this last one, we use the Monte Carlo approximation from Appendix~\ref{app:naslib_impl} with 200 random permutations. On all datasets, \texttt{Heat}~\texttt{(sum)} outperforms \texttt{Heat}, and achieves similar or better performance compared to \texttt{WL}. We argue that this is a strong result, given that the \texttt{WL} kernel is designed specifically for graphs, whereas heat kernels are general and lead to strong performance on a range of different combinatorial structures. 

\begin{figure}[ht]
    \centering
    \includegraphics[width=.85\columnwidth]{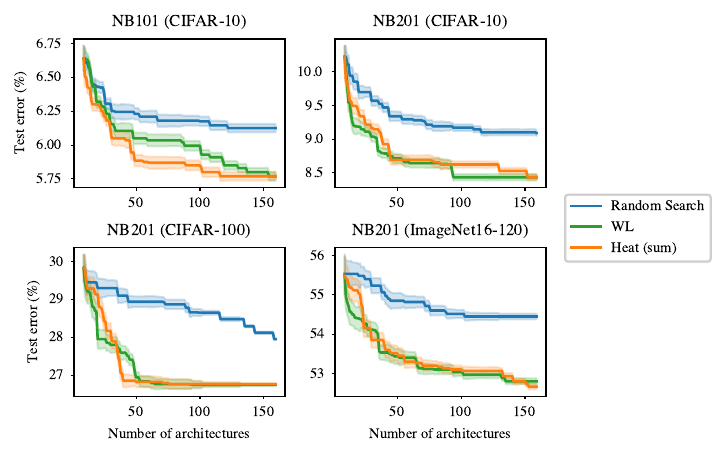}
    \caption{For Bayesian optimization on the \texttt{NASLib} benchmark, an averaged version of the \texttt{Heat} kernel, indicated by \texttt{Heat}~\texttt{(sum)}, achieves similar performance to the state-of-the-art \texttt{WL} kernel.}
    \label{fig:nas_bo}
\end{figure}

\end{document}